\definecolor{darkgreen}{rgb}{0,0.5,0}
\definecolor{darkred}{rgb}{0.7,0,0}
\definecolor{teal}{rgb}{0.3,0.8,0.8}
\definecolor{orange}{rgb}{1.0,0.5,0.0}
\definecolor{purple}{rgb}{0.8,0.0,0.8}
\newcommand{\kibitz}[2]{\ifnum\Comments=1{\textcolor{#1}{\textsf{\footnotesize #2}}}\fi}
\newcommand{\akshay}[1]{\kibitz{darkgreen}{[AK: #1]}}
\newcommand{\defeq}{\coloneqq}
\newcommand{\eqdef}{\eqqcolon}
\renewcommand{\cite}{\citep}
\def\vdm{\ensuremath{\hat{V}_{\textup{\mdseries DM}}}}
\def\vips{\ensuremath{\hat{V}_{\textup{\mdseries IPS}}}}
\def\vdr{\ensuremath{\hat{V}_{\textup{\mdseries DR}}}}
\def\vdrpi{\ensuremath{\hat{V}_{\textup{\mdseries DR-PI}}}}
\def\vdrsopi{\ensuremath{\hat{V}_{\textup{\mdseries DRos-PI}}}}
\def\vdrso{\ensuremath{\hat{V}_{\textup{\mdseries DRos}}}}
\def\vdrs{\ensuremath{\hat{V}_{\textup{\mdseries DRs}}}}
\def\vdrsp{\ensuremath{\hat{V}_{\textup{\mdseries DRps}}}}
\def\vtheta{\ensuremath{\hat{V}_{\theta}}}
\def\bias{\textup{\mdseries Bias}}
\def\hbias{\textup{\mdseries BiasUB}}
\def\hvar{\widehat{\Var}}
\def\tbias{\widetilde{\textup{\mdseries Bias}}}
\def\tvar{\widetilde{\Var}}
\def\mse{\textup{\mdseries MSE}\xspace}
\def\ndcg{\mathrm{NDCG}}
\def\dcg{\mathrm{DCG}}
\def\dcgs{\mathrm{DCG}^\star}
\def\rel{\mathrm{rel}}
\def\lasso{\textit{lasso}}
\def\tree{\textit{tree}}
\def\lone{\textup{\mdseries DRps}\xspace}
\def\ltwo{\textup{\mdseries DRos}\xspace}
\def\drs{\textup{\mdseries DRs}\xspace}
\def\drsdirect{\textup{\mdseries DRs-direct}\xspace}
\def\drsup{\textup{\mdseries DRs-upper}\xspace}
\def\snips{\textup{\mdseries snIPS}\xspace}
\def\sndr{\textup{\mdseries snDR}\xspace}
\def\switch{\textsc{switch}\xspace}
\newcommand{\heta}{\hat{\eta}}
\newcommand{\hetab}{\hat{\etab}}
\newcommand{\hw}{\hat{w}}
\newcommand{\hwb}{\hat{\wb}}
\newcommand{\E}{\mathbb{E}}
\newcommand{\Emu}[1]{\E_\mu\!\Bracks{#1}}
\DeclareMathOperator*{\Minimize}{Minimize}
\newcommand{\opt}{\textup{\mdseries o}\xspace}
\newcommand{\pes}{\textup{\mdseries p}\xspace}
\DeclareMathOperator{\linspan}{span}
\DeclareMathOperator{\supp}{supp}
\newcommand{\card}[1]{\lvert#1\rvert}
\newcommand{\set}[1]{\{#1\}}
\newcommand{\braces}[1]{\{#1\}}
\newcommand{\bigBraces}[1]{\bigl\{#1\bigr\}}
\newcommand{\BigBraces}[1]{\Bigl\{#1\Bigr\}}
\newcommand{\bigBracks}[1]{\bigl[#1\bigr]}
\newcommand{\BigBracks}[1]{\Bigl[#1\Bigr]}
\newcommand{\Bracks}[1]{\left[#1\right]}
\newcommand{\Parens}[1]{\left(#1\right)}
\newcommand{\bigParens}[1]{\bigl(#1\bigr)}
\newcommand{\BigParens}[1]{\Bigl(#1\Bigr)}
\newcommand{\given}{\mathbin{\vert}}
\renewcommand{\norm}[1]{\lVert#1\rVert}
\newcommand{\Norm}[1]{\left\lVert#1\right\rVert}
\newcommand{\bigNorm}[1]{\bigl\lVert#1\bigr\rVert}
\newcommand{\abs}[1]{\left\lvert#1\right\rvert}
\newcommand{\Abs}[1]{\left\lvert#1\right\rvert}
\newcommand{\bigAbs}[1]{\bigl\lvert#1\bigr\rvert}
\newcommand{\BigAbs}[1]{\Bigl\lvert#1\Bigr\rvert}
\icmltitlerunning{Doubly robust off-policy evaluation with shrinkage}
\begin{document}

\twocolumn[
\icmltitle{Doubly robust off-policy evaluation with shrinkage}



\icmlsetsymbol{equal}{*}

\begin{icmlauthorlist}
\icmlauthor{Yi Su}{cornell}
\icmlauthor{Maria Dimakopoulou}{netflix}
\icmlauthor{Akshay Krishnamurthy}{msr}
\icmlauthor{Miroslav Dud\'ik}{msr}
\end{icmlauthorlist}

\icmlaffiliation{cornell}{Cornell University, Ithaca, NY}
\icmlaffiliation{netflix}{Netflix, Los Gatos, CA}
\icmlaffiliation{msr}{Microsoft Research, New York, NY}

\icmlcorrespondingauthor{Yi Su}{ys756@cornell.edu}

\icmlkeywords{Off-policy evaluation}

\vskip 0.3in
]



\printAffiliationsAndNotice{}  

\begin{abstract}

We propose a new framework for designing estimators for off-policy
evaluation in contextual bandits. Our approach is based on the
asymptotically optimal doubly robust estimator, but we shrink the
importance weights to minimize a bound on the mean squared error, which
results in a better bias-variance tradeoff in finite samples. We use
this optimization-based framework to obtain three estimators:
(a) a weight-clipping estimator, (b) a new weight-shrinkage estimator,
and (c) the first shrinkage-based estimator for combinatorial action
sets. Extensive experiments in both standard and combinatorial bandit
benchmark problems show that our estimators are highly adaptive and
typically outperform state-of-the-art methods.

\end{abstract}

\section{Introduction}
\label{sec:intro}

Many real-world applications, ranging from online
news recommendation~\citep{li2011unbiased}, advertising~\citep{bottou2013counterfactual}, and search engines~\citep{li2015counterfactual} to personalized healthcare~\citep{zhou2017residual}, are
naturally modeled by the \emph{contextual bandit} protocol~\citep{langford2008epoch}, where
a learner repeatedly observes a context, takes an action, and accrues
reward.
In news recommendation, the context is any information about the user,
such as history of past visits, the action is the recommended article,
and the reward could indicate the user's click on the
article. The goal is to maximize the reward, but the learner can only
observe the reward for chosen actions, and not for the others.

We study a fundamental problem in contextual bandits
known as \emph{off-policy evaluation}, where the goal is to use the
data gathered by a past algorithm, known as the \emph{logging policy}, to estimate
the average reward of a new algorithm, known as the \emph{target policy}.
High-quality off-policy estimates help avoid costly A/B testing
and can also be used as subroutines for optimizing a policy~\citep{dudik2011doubly}.

The most accurate approaches to off-policy evaluation are variants of
\emph{doubly robust} (DR) estimators~\citep{robins1995semiparametric,
  bang2005doubly, dudik2011doubly}. DR estimation begins by fitting a
regression model to predict rewards as a function of context and
action.  The fitted model can be used to impute unobserved rewards of
the target policy on the training data, but such a direct estimate is
typically biased. Instead, DR adds a correction term obtained by
importance weighting the difference between observed rewards and
predicted rewards. The resulting approach is unbiased, and it is
asymptotically optimal under weaker assumptions than other
methods~\citep{rothe2016value}.  However, its finite-sample variance
can still be quite high when importance weights (also known as inverse
propensity scores) are large.
Therefore, several works have developed variants of DR that clip or
remove large importance weights. Although weight clipping incurs some
bias, it substantially decreases the variance and can yield a lower
mean squared
error~\citep{bembom2008data,bottou2013counterfactual,wang2017optimal,su2018cab}.
These works motivate weight shrinkage as a heuristic for trading off bias and variance, but they do not provide insight into
when and how these different methods should be used.


In this paper, we ask: \emph{What are the systematic strategies for
shrinking importance weights?} We seek to answer this question without making
strong assumptions about the quality of the reward predictor, but we
would like to adapt to its quality. We make the following
contributions:
\begin{itemize}
\setlength\itemsep{0.2em}
\item
We derive a general framework for shrinking the
importance weights by optimizing a sharp bound on the mean
squared error (MSE). We use two bounding techniques. The
first is agnostic to the quality of the reward estimator and yields
\emph{pessimistic shrinkage} estimators. The second incorporates
the quality of the reward predictor and yields \emph{optimistic shrinkage} estimators.\looseness=-1

\item
We provide theoretical justification for the standard practice of weight clipping
by showing that it corresponds to pessimistic shrinkage.

\item
Using optimistic shrinkage, we derive new estimators, which
are also applicable to \emph{combinatorial actions},
arising, for example, when a news portal is recommending not just a single article, but
a list of articles
\citep{cesa2012combinatorial,swaminathan2017off}.

\end{itemize}

Apart from the conceptual and theoretical contributions above, we also carry
out an extensive empirical evaluation. For atomic (i.e., non-combinatorial) actions,
we consider 108 experimental conditions derived from 9
real-world data sets and covering a range of data set sizes, feature dimensions,
policy overlap (i.e., the magnitude of importance weights), and quality
of reward estimators. For combinatorial actions, we consider a standard learning-to-rank
data set and vary the quality of reward estimators. In all instances, we demonstrate
the efficacy of our shrinkage approach.
Via extensive ablation studies, we also identify a robust
configuration of our shrinkage approach that we recommend as a
practical choice.\looseness=-1

\textbf{Comparison with related work.}
Off-policy estimation is studied in observational settings under the name \emph{average treatment
effect} (ATE) estimation, with many results on asymptotically optimal estimators~\citep{hahn1998role,hirano2003efficient,imbens2007mean,rothe2016value}, but only few
that optimize MSE in finite samples. Most notably,
\citet{Kallus17,Kallus18} develops the \emph{kernel optimal matching} (KOM) approach that
adjusts importance weights by optimizing MSE under
smoothness (or parametric) assumptions on the reward function.
This method is reminiscent of direct modeling, whose bias can be bounded under
smoothness
assumptions, but whose performance deteriorates if these
assumptions are violated. In contrast, we optimize importance weights with essentially no modeling assumptions.
Another difference is that KOM runs in time that is
super-linear in the data set size, which prevents its use with large data sets,
whereas our approach requires a single pass through the data and readily applies to large-scale scenarios.\looseness=-1

Several recent works study how to improve DR estimators
under similar assumptions as we make here~\cite{wang2017optimal,farajtabar2018more,su2018cab},
focusing either on weight shrinkage or on training of the reward predictor.
However, to our knowledge, we are the first to provide a
detailed theoretical and empirical investigation of the interplay between these two design
components. For example, in \pref{tab:ablation-eta}, we show that the \emph{more robust doubly robust} (MRDR)
approach for training of the reward predictor~\citep{farajtabar2018more} performs poorly in combination with
weight shrinkage. More generally, different estimators may require different reward predictors. This specific finding has practical
implications that are missing in prior work.\looseness=-1

\section{Setup}

We consider the \emph{contextual bandits} protocol, where a decision maker interacts with the environment by
repeatedly observing a \emph{context} $x\in\Xcal$, choosing
an \emph{action} $a \in \Acal$, and observing a \emph{reward} $r \in [0,1]$.
The context space $\Xcal$ can be uncountably large, but we assume that the
action space $\Acal$ is finite. In the news recommendation example, $x$ describes
the history of past visits of a given user, $a$ is a recommended article,
and $r$ equals one if the user clicks on the article and zero otherwise.
We assume that contexts are
sampled \emph{i.i.d.}\ from some distribution $D(x)$ and rewards are sampled
from some conditional distribution $D(r\given x,a)$.
We write $\eta(x,a) \defeq \EE\sbr{r \given x,a}$
for the expected reward, conditioned on a given context and action.


The behavior of a decision maker is formalized as a conditional distribution $\pi(a\given x)$
over actions given contexts, referred to as a \emph{policy}.
We also write $\pi(x,a,r)\defeq D(x)\pi(a\given x)D(r\given x,a)$ for the joint distribution over context-action-reward
triples when actions are selected by the policy $\pi$. The expected reward of a policy $\pi$,
called the \emph{value} of $\pi$, is denoted as
$V(\pi) \defeq \EE_{(x,a,r)\sim \pi}[r]$.

In the off-policy evaluation problem, we are given a dataset
$\{(x_i,a_i,r_i)\}_{i=1}^n \sim \mu$
consisting of context-action-reward triples collected by some
\emph{logging policy} $\mu$, and we would like to estimate the value of a \emph{target policy} $\pi$.
The quality of an estimator $\hat{V}(\pi)$ is measured by the \emph{mean
  squared error}
\begin{align*}
\mse\bigParens{\hat{V}(\pi)} \defeq \EE\BigBracks{ \bigParens{\hat{V}(\pi) - V(\pi)}^2},
\end{align*}
where the expectation is with respect to the data generation
process. In analyzing the error of an estimator, we rely on the
decomposition of MSE into the bias and variance terms:
\begin{align*}
 \mse\bigParens{\hat{V}(\pi)} &= \bias\bigParens{\hat{V}(\pi)}^2+\Var\bigBracks{\hat{V}(\pi)},\\
 \bias\bigParens{\hat{V}(\pi)} &\defeq \BigAbs{\EE\bigBracks{\hat{V}(\pi)-V(\pi)}}.
\end{align*}
We consider three standard approaches for off-policy evaluation. The
first two are
\emph{direct modeling} (DM) and \emph{inverse propensity scoring} (IPS). In DM,
we train a reward predictor $\heta: \Xcal \times \Acal \to [0,1]$ and use it to impute rewards. In IPS, we
simply reweight the data. The two estimators are:
\begin{align*}
\vdm(\pi;\heta) &\defeq \frac{1}{n}\sum_{i=1}^n \sum_{a\in\Acal} \pi(a \given x_i)\heta(x_i,a),
\\
\vips(\pi) &\defeq \frac{1}{n}\sum_{i=1}^n \frac{\pi(a_i\given x_i)}{\mu(a_i \given x_i)} r_i.
\end{align*}
Let $w(x,a) \defeq \pi(a\given x)/\mu(a \given
x)$ denote the \emph{importance weight}. We make a standard
assumption that $\pi$ is absolutely continuous with respect to $\mu$,
meaning that $\mu(a\given x)>0$ whenever $\pi(a\given x)>0$. This
ensures that the importance weights
are well defined and
$\smash{\vips(\pi)}$ is an unbiased estimator of
$V(\pi)$. If there is a substantial mismatch between $\pi$ and
$\mu$, then the importance weights will be large and $\smash{\vips(\pi)}$ will have large variance.
On the other hand, given any fixed reward
predictor $\heta$ (fit on a separate dataset), $\smash{\vdm(\pi)}$ has
low variance,
but it can be biased due to approximation errors in fitting
$\heta$.

The third approach, called the \emph{doubly robust} (DR) estimator, combines DM
and IPS:
\begin{align}
\notag
&\vdr(\pi;\heta)
\\
&\quad{}\defeq
\vdm(\pi;\heta)
+ \smash[t]{\frac{1}{n}\sum_{i=1}^{n}w(x_i,a_i)\bigParens{r_i - \heta(x_i,a_i)}}.
\label{eq:dr}
\end{align}
The DR estimator applies IPS to a shifted reward, using $\heta$ as a control variate
to decrease the variance of IPS, while preserving its unbiasedness.
DR is asymptotically optimal, as long as it is possible to derive sufficiently good reward predictors $\heta$
given enough data
~\citep{rothe2016value}.

However, even when the reward predictor $\heta$ is perfect,
stochasticity in the rewards may cause the terms $r_i-\heta(x_i,a_i)$, appearing
in the DR estimator, to be far from
zero. Multiplied by large importance weights $w(x_i,a_i)$, these terms yield large variance for DR in comparison with DM.
As mentioned in Section~\ref{sec:intro}, several approaches seek a more favorable bias--variance trade-off by shrinking the importance weights.
Our work also seeks to systematically replace the weights $w(x_i,a_i)$ with new weights $\hw(x_i,a_i)$ to
bring the variance of DR closer to that of DM.

In practice, $\heta$ is biased due to approximation errors,
so in this paper we make no assumptions about its quality. At the same time, we would like to make sure that our estimators
can adapt to high-quality $\heta$ if it is available. To motivate our adaptive estimator, we assume
that $\heta$ is trained via weighted least squares regression on a separate dataset than used in $\smash{\vdr}$. That is, for a dataset
$\smash{\{(x_j,a_j,r_j)\}_{j=1}^{m} \sim \mu}$, we consider a weighting function $z:\Xcal\times\Acal\to\RR^+$
and solve
\begin{align}
\heta \defeq \argmin_{f \in \Fcal} \frac{1}{m}\sum_{j=1}^{m} z(x_j,a_j)\bigParens{f(x_j,a_j) - r_j}^2,\label{eq:sq_opt}
\end{align}
where $\Fcal$ is some function class of reward predictors. Natural
choices of the weighting function $z$, explored in our experiments,
include $z(x,a)=1$, $z(x,a)=w(x,a)$ and $z(x,a)=w^2(x,a)$. We stress
that the assumption on how we fit $\heta$ only serves to guide our
derivations, but we make no specific assumptions about its quality. In
particular, we do not assume that $\Fcal$ contains a good
approximation of $\eta$.\looseness=-1

\section{Our Approach: DR with Shrinkage}
\label{sec:DRs}

Our approach replaces the importance-weight mapping $w: \Xcal
\times \Acal \to \RR^+$ in the DR estimator~\pref{eq:dr} with a new weight mapping $\hw: \Xcal
\times \Acal \to \RR^+$
found by directly optimizing sharp bounds on the MSE.
%
The resulting estimator, which we call  the \emph{doubly robust estimator with shrinkage} (\drs) thus depends
on both the reward predictor $\heta$ and the weight mapping $\hw$:
\begin{align}
\notag
&\vdrs(\pi;\heta,\hw)
\\
&\quad{}\defeq
\vdm(\pi;\heta)
+ \smash[t]{\frac{1}{n}\sum_{i=1}^{n}\hw(x_i,a_i)\bigParens{r_i - \heta(x_i,a_i)}}.
\label{eq:drs}
\end{align}
We assume that $0\le\hw\le w$, justifying the terminology ``shrinkage''.
For a fixed choice of $\pi$ and $\heta$, we will seek the mapping $\hw$ that minimizes the MSE of $\vdrs(\pi;\heta,\hw)$, which we simply denote as $\mse(\hw)$.
We similarly write $\bias(\hw)$ and $\Var(\hw)$ for the bias and variance of this estimator. 

We treat $\hw$ as the optimization variable and consider two upper bounds on MSE: an optimistic one and a pessimistic one. In both cases, we separately bound $\bias(\hw)$ and $\Var(\hw)$. To bound the bias, we use the following expression, derived from the fact that $\smash{\vdrs}$ is unbiased when $\hw=w$:
\begin{align}
\notag
\bias(\hw)
&=\BigAbs{\EE\bigBracks{\vdrs(\pi;\heta,\hw)}-\EE\bigBracks{\vdrs(\pi;\heta,w)}}
\\
\label{eq:bias}
&=\BigAbs{\E_{\mu}\bigBracks{\bigParens{\hw(x,a) - w(x,a)}\bigParens{r - \heta(x,a)}}}.
\mkern-1mu
\end{align}
To bound the variance, we rely on the following proposition, which states that it suffices to focus on the second moment of the terms $\hw(x_i,a_i)\bigParens{r_i-\heta(x_i,a_i)}$:
\begin{proposition}
\label{prop:variance}
If $0\le\hw\le w$
then
\[
\Abs{\Var(\hat{w})-\frac{1}{n}\E_{\mu}\BigBracks{\hw^2(x,a)\bigParens{r-\heta(x,a)}^2}}\le\frac{1}{n}.
\]
\end{proposition}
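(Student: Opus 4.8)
The plan is to reduce the statement to a single i.i.d.\ sample and then apply the law of total variance, conditioning on the context. First I would observe that $\vdrs(\pi;\heta,\hw) = \frac1n\sum_{i=1}^n g(x_i,a_i,r_i)$, where for a generic triple $(x,a,r)\sim\mu$ I set
\[
g(x,a,r) \defeq \sum_{a'\in\Acal}\pi(a'\given x)\heta(x,a') + \hw(x,a)\bigParens{r-\heta(x,a)}.
\]
Because the samples are i.i.d., $\Var(\hw)=\frac1n\Var_\mu[g]$. Writing $S\defeq\hw(x,a)\bigParens{r-\heta(x,a)}$ for the correction term, the quantity inside the proposition is exactly $\frac1n\E_\mu[S^2]$, so it suffices to prove $\bigAbs{\Var_\mu[g]-\E_\mu[S^2]}\le1$.

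Next I would condition on the context $x$. The direct-modeling part $T(x)\defeq\sum_{a'}\pi(a'\given x)\heta(x,a')$ is deterministic given $x$, so $\E[g\given x]=T(x)+\phi(x)$ with $\phi(x)\defeq\E_\mu[S\given x]$, and $\Var[g\given x]=\Var[S\given x]$. Since $\E_x\bigBracks{\Var[S\given x]}=\E_\mu[S^2]-\E_x[\phi^2]$ (here $\E_x$ is expectation over $x\sim D$), the law of total variance gives
\[
\Var_\mu[g]-\E_\mu[S^2] = \Var_x\bigParens{T(x)+\phi(x)} - \E_x\bigBracks{\phi(x)^2},
\]
and it remains only to bound the two terms on the right.

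The hard part is controlling $\E[g\given x]=T(x)+\phi(x)$: a priori the shrunk weights $\hw$ can be large, so neither $S$ nor $\phi$ is obviously bounded, and this is precisely where the hypothesis $0\le\hw\le w$ is essential. I would set $\hat{\pi}(a\given x)\defeq\mu(a\given x)\hw(x,a)$, so that $0\le\hat{\pi}(a\given x)\le\pi(a\given x)$, and then compute, using $\E_\mu[r\given x,a]=\eta(x,a)$,
\[
\E[g\given x]=\sum_{a}\bigParens{\pi(a\given x)-\hat{\pi}(a\given x)}\heta(x,a)+\sum_a\hat{\pi}(a\given x)\,\eta(x,a).
\]
The weights $\pi-\hat{\pi}$ and $\hat{\pi}$ are nonnegative and sum to $\sum_a\pi(a\given x)=1$, while every $\heta(x,a),\eta(x,a)\in[0,1]$, so $\E[g\given x]$ is a convex combination of numbers in $[0,1]$ and hence lies in $[0,1]$. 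Therefore $\Var_x\bigParens{T+\phi}\le\frac14$. For the second term, $\phi(x)=\E[g\given x]-T(x)$ is a difference of two quantities in $[0,1]$, so $\abs{\phi(x)}\le1$ and $\E_x[\phi^2]\le1$. Combining, $\Var_\mu[g]-\E_\mu[S^2]\in[-1,\frac14]$, whose absolute value is at most $1$; dividing by $n$ yields the claim. The only genuine work is the convexity observation for $\E[g\given x]$; everything else is the variance decomposition and elementary bounds for $[0,1]$-valued random variables.
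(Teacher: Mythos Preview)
Your proof is correct and follows essentially the same route as the paper: both apply the law of total variance after conditioning on $x$, reduce to bounding $\Var_x\bigParens{\E[g\mid x]}$ and $\E_x[\phi(x)^2]$, and use the key observation that $\E[g\mid x]$ is a convex combination of values $\heta(x,a),\eta(x,a)\in[0,1]$ (your sub-probability $\hat\pi=\mu\hw$ is exactly the paper's $c\pi$ with $c=\hw/w$). Your bound $\Var_x(T+\phi)\le\tfrac14$ is marginally sharper than the paper's $T_2\le 1$, but this does not change the final statement.
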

See appendix for the proof of \pref{prop:variance} (as well as other mathematical statements from this paper).

We derive estimators for two different regimes depending on the
quality of the reward predictor~$\heta$. Since we do not know the
quality of $\heta$ a priori, in Section~\ref{sec:modelsel} we derive a model
selection procedure to select between these two estimators.\looseness=-1



\subsection{DR with Optimistic Shrinkage}
\label{sec:opt}

Our first family of estimators is based on an optimistic MSE bound, which
adapts to the quality of~$\heta$, and which we expect to be tighter
when $\heta$ is more accurate.
Recall that $\heta$ is trained to minimize weighted square loss with respect
to some weighting function $z$, which we denote as
\[
   L(\heta)\defeq\E_{\mu}\bigBracks{z(x,a)\bigParens{r - \heta(x,a)}^2}.
\]
%
The loss $L(\heta)$ quantifies the quality of $\heta$. We use it to bound
the bias by applying the Cauchy--Schwarz inequality to~\eqref{eq:bias}:
\begin{multline}
\label{eq:bias:opt}
\bias(\hat{w})
\le
\sqrt{\Emu{\tfrac{1}{z(x,a)}\bigParens{\hw(x,a) - w(x,a)}^2}}
\\
{}\cdot
\sqrt{L(\heta)}.
\end{multline}
To bound the variance, we invoke~\pref{prop:variance} and
focus on bounding
the quantity $\E_{\mu}\bigBracks{\hw^2(r-\heta)^2}$:
\begin{align}
\notag
&\Emu{\hw^2(x,a)\bigParens{r - \heta(x,a)}^2}\\
&\quad{}\le
\sqrt{\Emu{\tfrac{1}{z(x,a)}\hw^4(x,a)}}
\sqrt{\Emu{z(x,a)\bigParens{r - \heta(x,a)}^4}} \notag
\\
\label{eq:var:opt}
&\quad{}\le
\sqrt{\Emu{\tfrac{w^2(x,a)}{z(x,a)}\hw^2(x,a)}}
\sqrt{L(\heta)},
\end{align}
where the first inequality follows by the Cauchy-Schwarz inequality,
and the second from the fact that $\hw^2(x,a)\le w^2(x,a)$ and $\abs{r-\heta(x,a)}\le 1$.

Combining the bounds~\eqref{eq:bias:opt} and~\eqref{eq:var:opt} with~\pref{prop:variance} yields the following bound on $\mse(\hw)$:
\begin{align*}
\mse(\hw)
& \le
  \Emu{\tfrac{1}{z(x,a)}\bigParens{\hw(x,a) - w(x,a)}^2} L(\heta)
\\
& \qquad{} +
  \sqrt{\Emu{\tfrac{w^2(x,a)}{z(x,a)}\hw^2(x,a)}}
  \sqrt{L(\heta)}
  +
  \frac1n.
\end{align*}
A direct minimization of this bound appears to be a high dimensional
optimization problem. Instead of minimizing the bound directly, we
note that it is a strictly increasing function of the two expectations
that appear in it. Thus, its minimizer must be on the Pareto front
with respect to the two expectations, meaning that for some choice of
$\lambda\in[0,\infty]$, it can be obtained by minimizing
\[
\lambda\Emu{\tfrac{1}{z(x,a)}\bigParens{\hw(x,a) - w(x,a)}^2}
+
\Emu{\tfrac{w^2(x,a)}{z(x,a)}\hw^2(x,a)}
\]
with respect to $\hw$.
%
%
This objective decomposes across contexts and actions. Taking the derivative with respect to $\hw(x,a)$ and setting it to
zero yields the solution
\begin{align*}
\hw_{\opt,\lambda}(x,a) = \frac{\lambda}{w^2(x,a) +\lambda} w(x,a),
\end{align*}
where ``\opt'' above is a mnemonic for optimistic shrinkage.
%
%
We refer to the \drs estimator with $\hw=\hw_{\opt,\lambda}$ as the
\emph{doubly robust estimator with optimistic shrinkage} (\ltwo)
and denote it by $\smash{\vdrso(\pi;\heta,\lambda)}$.
Note that this
estimator does not depend on $z$, although
it was included in the optimization objective. When $\lambda=0$, we have
$\hw(x,a)=0$ corresponding to
DM. As $\lambda\to\infty$, the weights increase and in the limit
become equal to $w(x,a)$, corresponding to standard DR.


\subsection{DR with Pessimistic Shrinkage}

Our second estimator family makes no assumptions on the quality of $\heta$
beyond the range bound $\heta(x,a)\in[0,1]$, which implies $\abs{\heta(x,a)-r}\le 1$
and yields the bounds
%
\begin{align}
\label{eq:pes}
& \bias(\hw) \leq \E_{\mu}\bigBracks{\abs{\hw(x,a) - w(x,a)}},\\
& \E_{\mu} \bigBracks{\hat{w}(x,a)^2(r-\heta(x,a))^2} \leq \E_{\mu}\bigBracks{\hat{w}(x,a)^2}.
\end{align}
%
As before, we do not optimize the resulting MSE bound directly and instead solve for
the Pareto front points parameterized by $\lambda\in[0,\infty]$ (we scale $\lambda$ by a factor
of two to obtain the solution that more cleanly
matches the clipping estimator):
\[
  \Minimize_{\hw}\;
2\lambda\E_{\mu}\bigBracks{\abs{\hw(x,a) - w(x,a)}}
+
\E_{\mu}\bigBracks{\hat{w}(x,a)^2}.
\]
%
The objective again decomposes across
context-action pairs, yielding the solution
\begin{align*}
\hat{w}_{\pes,\lambda}(x,a) = \min\set{\lambda,\, w(x,a)},
\end{align*}
which recovers (and justifies) existing weight-clipping approaches~\cite{kang2007demystifying,strehl2010learning,su2018cab} (see~\pref{app:examples} for detailed calculations). We
refer to the resulting estimator as $\smash{\vdrsp(\pi;\heta,\lambda)}$, for
\emph{doubly robust with pessimistic shrinkage}.
Similarly to optimistic shrinkage, we
recover DM for $\lambda=0$, and DR as $\lambda\to\infty$.

\section{Shrinkage for Combinatorial Actions}
\label{sec:slates}

We showcase the generality of our optimization-based
approach by deriving a shrinkage estimator for \emph{combinatorial actions}
(also called \emph{slates}), which arise, for example,
when recommending a ranked list of items.

In contextual combinatorial bandits,
the actions are represented as
vectors $\ab \in \RR^d$ for some dimension $d$
and the action space $\Acal\subseteq\RR^d$ is typically exponentially large in $d$.

\begin{example}[Ranking and NDCG]
\label{ex:slate}
Consider the task of recommending a ranked list of items such as images or web pages.
The context $x$ is the query submitted by a user
together with a user profile. The action $\ab$ represents a ranked list of $\ell$ items out
of $m$.
The list $(i_1,\dotsc,i_\ell)$, where $i_j\in\set{1,\dotsc,m}$, is encoded into an action vector
$\ab\in\set{0,1}^{\ell m}$ via \emph{$\ell$-hot encoding}, i.e., we split $\ab$ into
$\ell$ blocks of size $m$, and in the block $j$ we set the $i_j$-th coordinate to 1 and all others to 0.
As a reward
we use a standard information-retrieval metric called the \emph{normalized discounted cumulative gain},
defined as
$\ndcg(x,\ab) \defeq \dcg(x,\ab)/\dcgs(x)$,
where
$\dcg(x,\ab) \defeq \sum_{j=1}^\ell \tfrac{2^{\rel(x,i_j)}-1}{\log_2(j+1)}$,
$\dcgs(x) \defeq \max_{\ab'} \dcg(x,\ab')$,
and $\rel(x,i)$ is some intrinsic measure of item relevance.
(See, e.g., \citealp{swaminathan2017off}.)\looseness=-1
\end{example}

Standard importance weighting techniques, such as DR and IPS, can fail dramatically in the
combinatorial setting, because their variance
scales linearly with the size of $\Acal$, which
is typically exponential in $d$. However, if the expected reward is linear in $\ab$,
i.e., $\eta(x,\ab)=\etab(x)^\top\ab$ for some (unknown) function
$\etab:\Xcal\to\RR^d$, then it is possible to achieve variance polynomial in $d$
using the \emph{pseudo-inverse} estimator of~\citet{swaminathan2017off}. Given a
reward predictor $\hetab:\Xcal \to \RR^d$,
it is also possible to obtain the DR variant of this estimator (\mbox{DR-PI}):
%
%
%
\begin{align}
\label{eq:drpi}
\vdrpi(\pi;\hetab)
\defeq \frac{1}{n}\sum_{i=1}^n\hetab_i^\top \qb_{\pi,x_i}\! + \wb_i^\top \ab_i (r_i - \hetab_i^\top \ab_i),
\end{align}
%
where $\hetab_i \defeq \hetab(x_i)$, $\qb_{\pi,x_i}
\defeq \E_\pi[\ab \given x_i]$, $\wb_i \defeq
\Gamma_{\mu,x_i}^\dagger \qb_{\pi,x_i}$, $\Gamma_{\mu,x_i} \defeq
\E_\mu[\ab \ab^\top \given x_i]$, and $\dagger$ denotes the
matrix pseudo-inverse. The vector $\wb_i$ plays the role of the
importance weight, while the first term corresponds to the direct
modeling approach. \citet{swaminathan2017off}
establish that this estimator is unbiased when $\eta(x,\ab)$ is
linear in~$\ab$ and
$
  \linspan\bigParens{\supp\pi(\cdot\given x)}
  \subseteq
  \linspan\bigParens{\supp\mu(\cdot\given x)}
$,
which is a linear relaxation of absolute continuity. Note that
NDCG in \pref{ex:slate} satisfies the linearity assumption.




We next derive the shrunk variant of DR-PI, following the optimistic
bounding technique from Section~\ref{sec:opt}.
A formal difference
is that we seek a vector-valued map $\hat{\wb}:\Xcal\to\RR^d$.
Since $\wb(x)^\top\ab$ can be negative,
we formalize the shrinkage property
as $\hat{\wb}(x)^\top\ab=c(x,\ab)\wb(x)^\top\ab$ for some $c(x,\ab)\in[0,1]$.
Also, analogously to non-combinatorial
setup, we assume that $\hetab(x)^\top\ab \in [0,1]$ for all $\ab$.
Now
all the steps from Section~\ref{sec:opt}, except for \pref{prop:variance}
(to which we return below),
go through under substitution
$w(x,\ab)=\wb(x)^\top\ab$,
$\hw(x,\ab)=\hat{\wb}(x)^\top\ab$, and
$\heta(x,\ab)=\hat{\etab}(x)^\top\ab$. The resulting (optimistic) shrinkage
estimator takes form
\begin{align}
\notag
&
\vdrsopi(\pi;\hetab,\lambda)
\\
\label{eq:drsopi}
&\quad{}\defeq
\frac{1}{n}\sum_{i=1}^n\hetab_i^\top \qb_{\pi,x_i}\!
+ \frac{\lambda\wb_i^\top\ab_i}{\lambda+(\wb_i^\top\ab_i)^2}(r_i - \hetab_i^\top\ab_i).
\end{align}
The detailed derivation is in Appendix~\ref{app:slates}. To our knowledge
this is the first weight-shrinkage estimator for contextual combinatorial bandits.

To finish the section, we derive a combinatorial variant
of \pref{prop:variance}, establishing a tight, but simple-to-optimize
proxy for the variance of a DR-PI.
This requires an additional assumption that for each $x$,
the logging policy is supported on a linearly independent
set of actions $\smash{\Bcal_x}\subseteq\Acal$; this requirement is typically easy to
satisfy in practice (see, e.g., \pref{sec:exp-slates}). We write $B_x\in\RR^{d\times\card{\Bcal_x}}$
for the matrix with columns $\ab\in\smash{\Bcal_x}$, and $\smash{\vb_{\pi,x}}$ for the unique vector such that $B_x\vb_{\pi,x}=\qb_{\pi,x}$. Finally, let $\Var(\hwb)$ denote the variance
of a DR-PI estimator with the shrunk weight map $\hwb$.
\begin{proposition}
\label{prop:variance_pi}
Assume that $\mu(\cdot\given x)$ is supported on a linearly independent
set of actions for every $x$. If $\hat{\wb}(x)^\top\ab=c(x,\ab)\wb(x)^\top\ab$ for some $c(x,\ab)\in[0,1]$,
then
\begin{align*}
\Abs{\Var(\hwb) - \frac{1}{n}\E_{\mu}\BigBracks{(\hwb^\top\ab)^2\bigParens{r-\hetab^\top\ab}^2}}
\leq \frac1n\E_x[\norm{\vb_{\pi,x}}_1^2].
\end{align*}
\end{proposition}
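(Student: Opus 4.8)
The plan is to mirror the proof of \pref{prop:variance}, replacing the scalar importance weight by $\wb(x)^\top\ab$ and tracking the extra constant that the linear-independence assumption produces. Since the $n$ summands of $\vdrsopi$ are i.i.d., I would first reduce to a single term $T \defeq \hetab(x)^\top\qb_{\pi,x} + (\hwb(x)^\top\ab)(r-\hetab(x)^\top\ab)$, so that $\Var(\hwb)=\tfrac1n\Var(T)$ and the proxy equals $\tfrac1n\E_\mu[(\hwb^\top\ab)^2(r-\hetab^\top\ab)^2]=\tfrac1n\E[g^2]$, where $g\defeq(\hwb^\top\ab)(r-\hetab^\top\ab)$ is the correction and $h(x)\defeq\hetab(x)^\top\qb_{\pi,x}$ is the context-only direct-modeling term, so $T=h+g$. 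Writing $\bar g(x)\defeq\E[g\given x]$ and $s(x)\defeq\E[T\given x]=h(x)+\bar g(x)$, the law of total variance yields the identity
\[
\Var(T)-\E[g^2]=\Var_x(s)-\E_x[\bar g^2],
\]
because both $\Var(T)$ and $\E[g^2]$ contain the shared term $\E_x[\Var(g\given x)]$. The goal then reduces to showing $\Abs{\Var_x(s)-\E_x[\bar g^2]}\le\E_x[\norm{\vb_{\pi,x}}_1^2]$.

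The crux will be a clean formula for the shrunk weights on the support of $\mu$. Because $\qb_{\pi,x}=B_x\vb_{\pi,x}$ lies in $\mathrm{range}(B_x)=\mathrm{range}(\Gamma_{\mu,x})$ (using that $\Gamma_{\mu,x}=\sum_{\ab\in\Bcal_x}\mu(\ab\given x)\,\ab\ab^\top$ shares its range with $B_x$, whose columns are linearly independent), the projection $\Gamma_{\mu,x}\Gamma_{\mu,x}^\dagger$ fixes $\qb_{\pi,x}$, so $\wb(x)=\Gamma_{\mu,x}^\dagger\qb_{\pi,x}$ satisfies the moment condition $\Gamma_{\mu,x}\wb(x)=\qb_{\pi,x}$, i.e. $\sum_{\ab\in\Bcal_x}\mu(\ab\given x)\,(\wb^\top\ab)\,\ab=\sum_{\ab\in\Bcal_x}(\vb_{\pi,x})_\ab\,\ab$. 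Matching coefficients across the linearly independent family $\Bcal_x$ gives, for every $\ab\in\Bcal_x$,
\[
\wb(x)^\top\ab=\frac{(\vb_{\pi,x})_\ab}{\mu(\ab\given x)},
\]
the combinatorial counterpart of the scalar identity $w=\pi/\mu$, with $\vb_{\pi,x}$ in the role of the target probabilities.

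Substituting this identity cancels the $\mu(\ab\given x)$ factors in $\bar g$ and $s$. For $\bar g(x)=\sum_{\ab\in\Bcal_x}\mu(\ab\given x)(\hwb^\top\ab)(\eta(x,\ab)-\hetab^\top\ab)$, writing $\hwb^\top\ab=c(x,\ab)\wb^\top\ab$ gives $\bar g(x)=\sum_{\ab\in\Bcal_x}c(x,\ab)(\vb_{\pi,x})_\ab(\eta(x,\ab)-\hetab^\top\ab)$; since $c\in[0,1]$ and $\abs{\eta-\hetab^\top\ab}\le1$, this signed sum obeys $\abs{\bar g(x)}\le\norm{\vb_{\pi,x}}_1$. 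For $s$, expanding $h(x)=\hetab^\top\qb_{\pi,x}=\sum_{\ab}(\vb_{\pi,x})_\ab\,\hetab^\top\ab$ and adding $\bar g$ gives $s(x)=\sum_{\ab\in\Bcal_x}(\vb_{\pi,x})_\ab\big[(1-c(x,\ab))\hetab^\top\ab+c(x,\ab)\eta(x,\ab)\big]$, where each bracket is a convex combination of values in $[0,1]$ (using $\hetab^\top\ab\in[0,1]$ and $\eta\in[0,1]$), hence lies in $[0,1]$, so $\abs{s(x)}\le\norm{\vb_{\pi,x}}_1$ as well. This is the analog of the atomic argument that $\E[T\given x]$ is a $\pi$-average of rewards in $[0,1]$; the only new feature is that $\vb_{\pi,x}$ may have negative entries, turning the convex combination into a signed one and replacing the bound $1$ by $\norm{\cdot}_1$.

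With these two pointwise bounds, the proof concludes at once: $0\le\Var_x(s)\le\E_x[s^2]\le\E_x[\norm{\vb_{\pi,x}}_1^2]$ and $0\le\E_x[\bar g^2]\le\E_x[\norm{\vb_{\pi,x}}_1^2]$, so their difference lies in $[-\E_x[\norm{\vb_{\pi,x}}_1^2],\,\E_x[\norm{\vb_{\pi,x}}_1^2]]$, which is the claim after dividing by $n$. The main obstacle is the middle step: deriving the weight identity $\wb^\top\ab=(\vb_{\pi,x})_\ab/\mu(\ab\given x)$ from the pseudo-inverse definition, which is exactly where linear independence of $\supp\mu(\cdot\given x)$ is indispensable, since it simultaneously guarantees $\qb_{\pi,x}\in\mathrm{range}(\Gamma_{\mu,x})$ (so the projection acts as the identity) and allows the coefficients to be read off uniquely. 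Once that identity is in place, the signed-combination bounds on $s$ and $\bar g$ are routine.
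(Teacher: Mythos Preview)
Your proof is correct and follows essentially the same route as the paper: decompose via the law of total variance, show that on the support of $\mu$ the (shrunk) weights satisfy $\wb(x)^\top\ab=(\vb_{\pi,x})_\ab/\mu(\ab\given x)$, and then bound both $s(x)$ and $\bar g(x)$ by $\norm{\vb_{\pi,x}}_1$ via H\"older. The one stylistic difference is how the key weight identity is obtained: the paper computes the pseudo-inverse explicitly through the lemma $(BDB^\top)^\dagger=BK^{-1}D^{-1}K^{-1}B^\top$ and then multiplies out $\wb^\top B=\vb_\pi^\top D_\mu^{-1}$, whereas you derive it more directly from the moment condition $\Gamma_{\mu,x}\wb(x)=\qb_{\pi,x}$ and coefficient matching under linear independence. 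Both are valid; your route avoids the auxiliary pseudo-inverse fact, while the paper's matrix form $\hwb^\top B=\vb_\pi^\top D_\mu^{-1}C$ makes the subsequent H\"older step slightly more compact.
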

Note that the quantity $\norm{\vb_{\pi,x}}_1$ on the right-hand side only depends on
the set $\Bcal_x$, but not on the probabilities with which $\mu$ chooses $\ab\in\Bcal_x$.
Non-combinatorial setting of \pref{sec:DRs} is a special case of the
linearly independent setting, where $d=\card{\Acal}$ and actions are represented by
standard basis vectors. In this case, $\norm{\vb_{\pi,x}}_1=1$ and we recover
\pref{prop:variance}.
We can always select $\Bcal_x$ to be
an (approximate) \emph{barycentric spanner}
and achieve $\norm{\vb_{\pi,x}}_1=O(d)$~\citep{AwerbuchKl08,DaniHaKa08}.\looseness=-1

\section{Model Selection}
\label{sec:modelsel}

All of our shrinkage estimators have hyperparameters which we condense
into a tuple $\theta$. For example $\theta = (\heta,\opt,\lambda)$
denotes that we are using a reward predictor $\heta$ and
optimistic shrinkage with the parameter $\lambda$. To select among
these hyperparameters, we propose and analyze a simple model selection
procedure.


Let $\vtheta$ denote the estimator parameterized by $\theta$.
We consider the procedure that estimates
the variance of $\vtheta$ by sample variance $\smash{\hvar(\theta)}$,
and bounds the bias of $\vtheta$ by a
data-dependent
upper bound $\hbias(\theta)$. The only
requirement is that for all~$\theta$, $\bias(\theta) \leq
\hbias(\theta)$ (with high probability), and that $\hbias(\theta)=0$
whenever $\bias(\theta)=0$; this holds for both bias bounds from \pref{sec:DRs},
as they become zero when $\hw=w$.  Now, to
choose $\theta$ from a set of hyperparameters $\Theta$, we
optimize the estimate of the MSE:
\begin{align*}
\hat{\theta} \gets
\Minimize_{\theta \in \Theta}\;\hbias(\theta)^2 + \hvar(\theta).
\end{align*}
The next theorem shows that this procedure always compares favorably with
all the unbiased estimators included in $\Theta$, up to an asymptotically negligible
term $O(n^{-3/2})$. In particular, the procedure is asymptotically optimal
whenever $\Theta$ includes a standard (non-shrunk) DR.

\begin{theorem}
\label{thm:model_selection}
Let $\Theta$ be a finite set of hyperparameter values and let
$\Theta_0\defeq\set{\theta\in\Theta:\:\bias(\theta)=0}$ denote the
subset of unbiased estimators. Assume that with probability
$1-\delta/2$ we have $\bias(\theta) \leq \hbias(\theta)$ for all
$\theta \in \Theta$. Then there exists a universal constant $C$ such
that with probability at least $1-\delta$ we have
\begin{align*}
\smash[b]{\mse(\hat{\theta}) \leq \min_{\theta_0 \in \Theta_0} \mse(\theta_0) + C\log (|\Theta|/\delta)/n^{3/2}.}
\end{align*}
\end{theorem}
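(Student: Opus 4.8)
The plan is to show that the selected hyperparameter $\hat\theta$ has MSE not much larger than the best unbiased estimator, by controlling three sources of error: the gap between the true MSE and the empirical objective being minimized, the gap on the minimizing side, and the concentration of the sample variance estimate. The core observation is that $\mse(\theta)=\bias(\theta)^2+\Var(\theta)$, and the procedure minimizes $\hbias(\theta)^2+\hvar(\theta)$, so I need to relate these two quantities uniformly over the finite set $\Theta$.

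First I would establish, on the high-probability event where $\bias(\theta)\le\hbias(\theta)$ holds for all $\theta$, the lower sandwich: for every $\theta$,
\begin{align*}
\bias(\theta)^2+\Var(\theta)\le\hbias(\theta)^2+\Var(\theta),
\end{align*}
so the true MSE is dominated by the objective with $\Var$ in place of $\hvar$. Next I would apply a concentration inequality (Bernstein, since the estimator summands are bounded because $0\le\hw\le w$ and rewards lie in $[0,1]$) together with a union bound over the finite set $\Theta$, to show that with probability at least $1-\delta/2$,
\begin{align*}
\Abs{\hvar(\theta)-\Var(\theta)}\le\varepsilon_n(\theta)\quad\text{for all }\theta\in\Theta,
\end{align*}
where $\varepsilon_n$ carries the $\log(|\Theta|/\delta)/n^{3/2}$ scaling. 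Intersecting the two events gives total failure probability at most $\delta$. The key subtlety is that the deviation of an empirical variance from the true variance concentrates at rate $n^{-3/2}$ rather than $n^{-1/2}$ because the variance itself is $O(1/n)$ (by \pref{prop:variance}), so its fluctuation is a lower-order term; I would make this explicit by writing $\Var(\theta)=\Theta(1/n)$ and noting that a relative $O(n^{-1/2})$ deviation on an $O(1/n)$ quantity yields the stated $O(n^{-3/2})$ absolute bound.

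Then I would chain the inequalities. Let $\theta_0^\star\in\Theta_0$ achieve the minimum MSE among unbiased estimators. By the definition of $\hat\theta$ as the minimizer of the empirical objective,
\begin{align*}
\hbias(\hat\theta)^2+\hvar(\hat\theta)\le\hbias(\theta_0^\star)^2+\hvar(\theta_0^\star).
\end{align*}
On the right, $\bias(\theta_0^\star)=0$ forces $\hbias(\theta_0^\star)=0$ by the assumed property of the bias bound, so the right side equals $\hvar(\theta_0^\star)\le\Var(\theta_0^\star)+\varepsilon_n=\mse(\theta_0^\star)+\varepsilon_n$. On the left, I use $\mse(\hat\theta)=\bias(\hat\theta)^2+\Var(\hat\theta)\le\hbias(\hat\theta)^2+\hvar(\hat\theta)+\varepsilon_n$, combining the lower sandwich with the variance concentration. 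Stringing these together yields $\mse(\hat\theta)\le\mse(\theta_0^\star)+2\varepsilon_n$, which is the claim after absorbing constants into $C$.

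The main obstacle will be making the $n^{-3/2}$ rate rigorous, since a naive application of a variance-concentration bound gives only $n^{-1/2}$. The crucial step is exploiting that the per-sample contributions to the variance are $O(1/n)$ in magnitude, so that both the true variance and its sampling error are small; I expect to need a careful Bernstein-type argument applied to the sum $\frac1n\sum_i\hw(x_i,a_i)(r_i-\heta(x_i,a_i))$ that tracks the extra factor of $1/n$. A secondary technical point is ensuring the bias-bound event and the variance-concentration event can be taken simultaneously at total probability $1-\delta$, which the $\delta/2$ splitting in the statement already anticipates. I do not expect the chaining or the union bound to be difficult once the correct concentration rate is pinned down.
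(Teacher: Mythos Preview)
Your chaining argument is exactly the paper's: bound $\mse(\hat\theta)\le\hbias(\hat\theta)^2+\hvar(\hat\theta)+\varepsilon_n$, use optimality of $\hat\theta$, then use $\hbias(\theta_0)=0$ and $\hvar(\theta_0)\le\Var(\theta_0)+\varepsilon_n$. The union-bound bookkeeping is also identical.

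The one place where your proposal is slightly off is the mechanism behind the $n^{-3/2}$ rate. It is \emph{not} that ``the per-sample contributions to the variance are $O(1/n)$'' (they are $O(1)$), nor that you apply Bernstein to the estimator sum $\frac1n\sum_i\hw(x_i,a_i)(r_i-\heta(x_i,a_i))$ itself. Rather, write $Z_i$ for the $i$th summand of the estimator; then $\Var(\theta)=\frac1n\Var(Z)$ and $\hvar(\theta)=\frac1n\hat\sigma^2$ where $\hat\sigma^2$ is the (per-sample) empirical variance of the $Z_i$'s. The quantity $\hat\sigma^2$ is a U-statistic of order two in bounded variables, so $\bigl|\hat\sigma^2-\Var(Z)\bigr|=O_p(n^{-1/2})$; the extra $1/n$ prefactor then delivers $n^{-3/2}$. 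The paper proves this concentration via a decoupling inequality for U-statistics followed by Bernstein (their Lemma on variance estimation); a more elementary route that would also work is to expand $\hat\sigma^2=\frac1n\sum_i Z_i^2-\bar Z^2$ and apply Bernstein separately to $\frac1n\sum_i Z_i^2$ and to $\bar Z$. Either way, once you have $\bigl|\hvar(\theta)-\Var(\theta)\bigr|\le C\log(|\Theta|/\delta)/n^{3/2}$ uniformly, your chain goes through verbatim. Note that \pref{prop:variance} is not actually needed here; the $1/n$ comes simply from the fact that $\hat V_\theta$ is a sample average.
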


There are many strategies to construct data-dependent bias bounds
with the required properties. The three bounds in our
experiments take form of sample averages that approximate expectations in: (i)
the expression for the bias given in~\eqref{eq:bias}, (ii) the
optimistic bias bound in~\eqref{eq:bias:opt}, and (iii) the pessimistic bias
bound in~\eqref{eq:pes}. In our theory, these estimates need to be adjusted
to obtain high-probability
confidence bounds.
In our experiments, we evaluate both the basic estimates and
adjusted variants where we add twice the standard error.\looseness=-1


Our model selection procedure is related to
MAGIC \citep{thomas2016data} as well as the procedure for the SWITCH
estimator~\citep{wang2017optimal}. Unlike MAGIC, we pick a
single hyperparameter value $\theta$ rather than aggregating several, and
we use different bias and variance estimates. SWITCH uses our
pessimistic bias bound~\eqref{eq:pes},
but with no theoretical justification. We use two additional
bounding strategies, which are empirically shown to help,
and provide theoretical justification in the form of an oracle inequality.

\section{Experiments}
\label{sec:experiments}

We evaluate our new estimators on the tasks of off-policy evaluation and off-policy learning and
compare their performance with previous estimators. Our secondary goal is to identify
the configuration of the shrinkage estimator that is most robust for use in practice.


\begin{table*}%
{\small%
\vspace{-0.7\baselineskip}%
\newcommand{\myheight}{1.7in}%
\addtolength{\tabcolsep}{-1pt}%
\begin{minipage}[t][\myheight]{.23\linewidth}%
\caption{Policy parameters used in the experiments.}
\label{tab:policies}
\vfill
\centering%
\begin{tabular}{|c|c|c|c|}
\hline
                                & base                                         & $\alpha$          & $\beta$ \\ \hline
target                & $\pi_{1,\textrm{det}}$ & 0.9                            & 0                    \\ \hline
\multirow{5}{*}{$\!$logging$\!$} & $\pi_{1,\textrm{det}}$ & 0.7                            & 0.2                  \\
                     & $\pi_{1,\textrm{det}}$ & 0.5                            & 0.2                  \\
                     & ---                    & $1/k$                  & 0                    \\
                     & $\pi_{2,\textrm{det}}$ & 0.3                            & 0.2                  \\
                     & $\pi_{2,\textrm{det}}$ & 0.5                            & 0.2                  \\
                     & $\pi_{2,\textrm{det}}$ & 0.95                           & 0.1 \\ \hline
\end{tabular}

\end{minipage}
\addtolength{\tabcolsep}{1pt}%
\hfill
\addtolength{\tabcolsep}{-2.25pt}%
\begin{minipage}[t][\myheight]{0.41\linewidth}
\caption{Comparison of
  reward predictors using a fixed estimator (with oracle tuning if
  applicable); reporting the number of conditions where a regressor is
  statistically as good as the best and, in parenthesis,
  the number of conditions where it statistically dominates all
  others.}
\label{tab:ablation-eta}
\vfill
\centering%


\begin{tabular}{| l | c | c | c | c | c |}
\hline
& $\hat{\eta} \equiv 0$ & $z \equiv 1$ & $z = w$ & $z = w^2$ & MRDR\\
\hline\hline
DM & 0 (0) & 47 (23) & 45 (22) & 41 (31) & 11 (5)\\
DR & 27 (2) & 86 (9) & 90 (4) & 85 (5) & 65 (0)\\
snDR & 63 (7) & 80 (2) & 85 (8) & 69 (4) & 54 (0)\\
DRs & 23 (19) & 44 (16) & 35 (4) & 62 (35) & 18 (2)\\
\hline
\end{tabular}
\end{minipage}
\addtolength{\tabcolsep}{2.25pt}%
\hfill
\begin{minipage}[t][\myheight]{.31\linewidth}
\caption{Comparison of shrinkage types using a fixed reward
  predictor (with oracle tuning); reporting the number of conditions
  where one statistically dominates the other.}
\label{tab:ablation-shrinkage}
\vfill
\centering%
\begin{tabular}{| l | c | c |}
\hline
& \lone & \ltwo \\
\hline\hline
$\hat{\eta} \equiv 0$ & 21 & 51\\
$z \equiv 1$ & 58 & 28\\
$z = w$ & 55 & 30\\
$z = w^2$ & 55 & 29\\
\textsc{MRDR} & 49 & 29\\
\hline
\end{tabular}
\end{minipage}
}%
\end{table*}

\subsection{Non-combinatorial Setting}

\textbf{Datasets.}
Following prior work~\cite{dudik2014doubly,
wang2017optimal, farajtabar2018more, su2018cab}, we simulate bandit
feedback on 9 UCI multi-class classification datasets. This lets us
evaluate estimators in a broad range of conditions and gives us
ground-truth policy values (see~\pref{tab:datasets} in the appendix for the
dataset statistics). Each multi-class dataset with $k$ classes corresponds
to a contextual bandit problem with $k$ possible actions coinciding with
classes. We consider either \emph{deterministic rewards}, where on multiclass
example $(x,y^*)$, the action $y$ yields the reward $r=\one\braces{y=y^*}$, or
\emph{stochastic rewards} where $r=\one\braces{y=y^*}$ with probability $0.75$ and
$r=1-\one\braces{y=y^*}$ otherwise. For every dataset, we hold
out $25\%$ of the examples to measure ground truth. On the remaining
$75\%$ of the dataset,
we use logging policy $\mu$ to simulate $n$ bandit examples by sampling a context $x$
from the dataset,
sampling an action $y \sim \mu(\cdot\given x)$ and then observing
a deterministic or stochastic reward $r$. The value of $n$ varies across
experimental conditions.



\textbf{Policies.}
We use the $25\%$ held-out data to obtain logging and target policies
as follows. We first obtain two deterministic policies
$\pi_{1,\textrm{det}}$ and $\pi_{2,\textrm{det}}$ by training two logistic models on the same data, but using either the
first or second half of the features.
We obtain stochastic
policies parameterized by $(\alpha,\beta)$, following the \emph{softening} technique of \citet{farajtabar2018more}. Specifically, $\pi_{1,(\alpha,\beta)}(a \given x)
= (\alpha+\beta u)$ if $a = \pi_{1,\textrm{det}}(x)$ and
$\pi_{1,(\alpha,\beta)}(a \given x) = \frac{1-\alpha-\beta u}{k-1}$
otherwise, where $u \sim
\textrm{Unif}([-0.5,0.5])$. In off-policy evaluation
experiments, we consider a fixed target and several choices
of logging policy
(see~\pref{tab:policies}). In off-policy learning we use
$\pi_{1,(0.9,0)}$ as the logging policy.

\textbf{Reward predictors.}
We obtain reward predictors $\heta$ by training linear models via weighted least
squares with $\ell_2$
regularization.
We consider weights $z(x,a) \in\{1,
w(x,a), w^2(x,a)\}$ as well as the \emph{more robust doubly robust}, or MRDR, weight design of~\citet{farajtabar2018more}
(see~\pref{app:experiments}). In evaluation experiments, we use $1/2$ of the bandit data
to train $\heta$; in learning experiments, we use $1/3$ of the bandit data to train $\heta$. In addition to the four
trained reward predictors, we also consider $\heta\equiv 0$.
The remaining bandit data is used to calculate the value of each estimator.\looseness=-1

\textbf{Baselines.}
We include a number of estimators in our evaluation: the
direct modeling approach (DM), doubly-robust approach (DR) and its
self-normalized variant (\sndr), our approach (\drs), and the
doubly-robust version of the \switch estimator of \citet{wang2017optimal}, which also performs a form of weight
clipping.\footnote{For simplicity we call this estimator \switch, although
  Wang et al.\ call it \textsc{switch-DR}.}
Note that DR with $\hat{\eta}\equiv 0$ is identical to inverse
propensity scoring (IPS); we refer to its self-normalized variant as
\snips.
Our estimator and \switch have hyperparameters, which are selected by their
respective model selection procedures (see~\pref{app:experiments} for
details about the hyperparameter grid).

\subsubsection{Off-policy Evaluation}

We begin by evaluating different configurations of \drs via an
ablation analysis. Then we compare \drs with baseline estimators. We have a
total of $108$ experimental conditions: for each of the $9$ datasets
we use $6$ logging policies and consider stochastic or deterministic
rewards. Except for the learning curves below, we always take $n$
to be all available bandit data ($75\%$ of the overall dataset).

We measure performance with clipped MSE, $\EE\bigBracks{(\hat{V} - V(\pi))^2
  \wedge 1}$, where $\hat{V}$ is the estimator and $V(\pi)$ is the
ground truth (computed on the held-out $25\%$ of the data). We
use 500 replicates of bandit-data generation
 to estimate the MSE; statistical
comparisons are based on paired $t$-tests at significance level $0.05$. In some of our ablation
experiments, we pick the best hyperparameters against the test set on a
per-replicate basis, which we call \emph{oracle tuning} and always
call out explicitly.\looseness=-1

\textbf{Ablation analysis.}
We conduct two ablation studies:
one evaluating different reward predictors and the other evaluating
the optimistic and pessimistic shrinkage types.

In \pref{tab:ablation-eta}, for each fixed estimator type (e.g., DR) we
evaluate each reward predictor by reporting the number of conditions
where it is statistically indistinguishable from the best and the
number of conditions where it statistically dominates all other
predictors.  For \drs we use oracle tuning
for the shrinkage type and coefficient $\lambda$.
The table shows that weight shrinkage strongly influences the choice of
regressor.
For example, $z\equiv 1$ and $z=w$ are top choices for DR, but
with the inclusion of shrinkage in \drs, $z=w^2$ emerges as the
best choice.
%
In our comparison experiments below, we run each method with
its best reward predictor: DM with $z\equiv 1$, snDR with $z=w$, and
\drs and \switch with $z=w^2$. For \drs and \switch, we additionally also consider $\heta\equiv 0$,
because it allows including IPS as their special case.  Somewhat
surprisingly, in our experiments, MRDR is dominated by other reward
predictors (except for $\heta\equiv0$), and this remains true even
with a deterministic target policy (see \pref{tab:ablations_det}
in the appendix).


In \pref{tab:ablation-shrinkage}, we compare optimistic and pessimistic shrinkage when
paired with a fixed reward predictor (using oracle tuning for $\lambda$).
We report how many times each
estimator statistically dominates the other. The results suggest
that both shrinkage types are important for robust performance across
conditions, so we consider both choices going forward.\looseness=-1



\begin{figure*}[!htb]
\includegraphics[width=\textwidth]{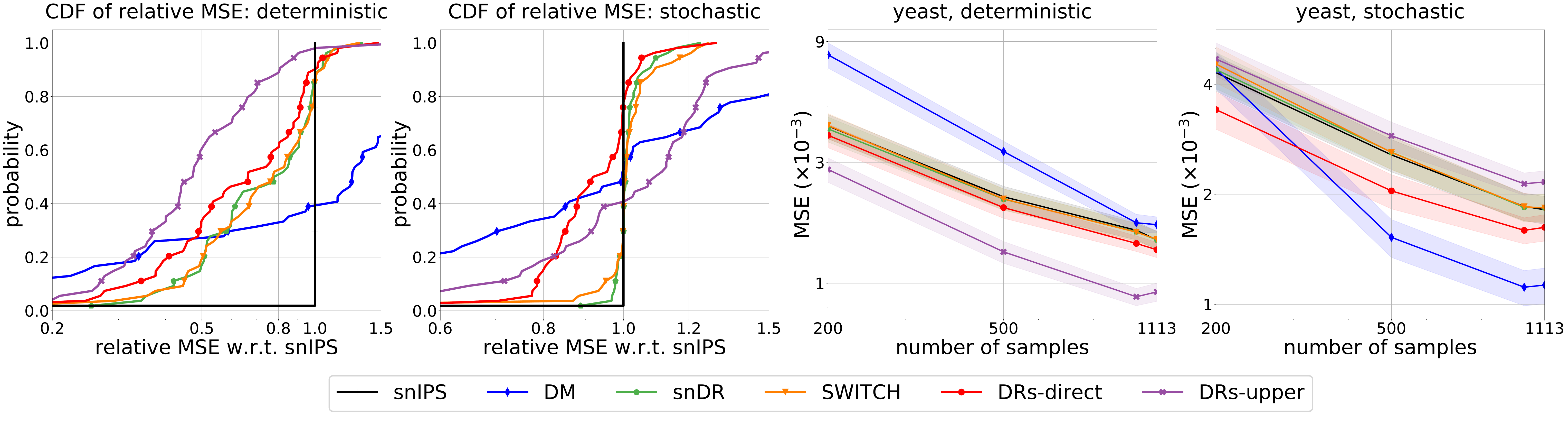}
\vspace{-0.8cm}
\caption{From left to right: (1) CDF of relative MSE w.r.t.\ snIPS for deterministic rewards, 54 conditions in total; (2) CDF of relative MSE w.r.t.\ snIPS for stochastic rewards, 54 conditions in total; (3) learning curves on \emph{yeast} dataset, using base policy $\pi_1$ with $\alpha=0.7$ and $\beta=0.2$, deterministic reward; (4) learning curves on \emph{yeast} dataset, using base policy $\pi_1$ with $\alpha=0.7$ and $\beta=0.2$, stochastic reward.}
\label{fig:atomic_fig}
\end{figure*}

\textbf{Comparisons.}
In~\pref{fig:atomic_fig} (left two plots), we compare our new estimator with the
baselines. We visualize the results by plotting the cumulative
distribution function (CDF) of the normalized MSE of each method (normalized by the
MSE of \snips) across the experimental conditions. Better performance corresponds
to CDF curves towards the top-left corner, meaning the method
achieves a lower MSE more frequently. The first plot
summarizes 54 conditions where the reward is deterministic, while the
second plot considers the 54 stochastic reward conditions. For
\drs we consider two model selection procedures outlined in \pref{sec:modelsel} that differ in
their choice of $\hbias$.
\drsdirect estimates the expectations in the expressions in Eqs.~\eqref{eq:bias},~\eqref{eq:bias:opt}, and~\eqref{eq:pes} (corresponding to the bias and bias bounds) by empirical averages and takes their pointwise minimum. \drsup adds to these estimates twice
their standard error, before taking minimum, more closely matching our
theory.
For \drs, we use the zero reward predictor and
the one trained with $z = w^2$, and we always select between both
shrinkage types. Since \switch also comes with a model selection
procedure, we use it to select between the same two reward predictors as $\drs$.\looseness=-1


In the deterministic case (the first plot), we see that \drsup has the best
aggregate performance, by a large margin. \drsdirect also has better
aggregate performance than the baselines on most of the conditions. In
the stochastic case (the second plot), \drsdirect has similarly strong performance, but
\drsup degrades considerably, suggesting this model selection scheme
is less robust to stochastic rewards.
We illustrate this phenomenon in the right two plots of~\pref{fig:atomic_fig}, plotting
the MSE as a function of the number of samples for one choice
of a logging policy and dataset, first with deterministic rewards and then
with stochastic rewards. Because of a more robust performance,
we therefore advocate for
\drsdirect as our final method.


\begin{figure*}[!htb]
\includegraphics[width=\textwidth]{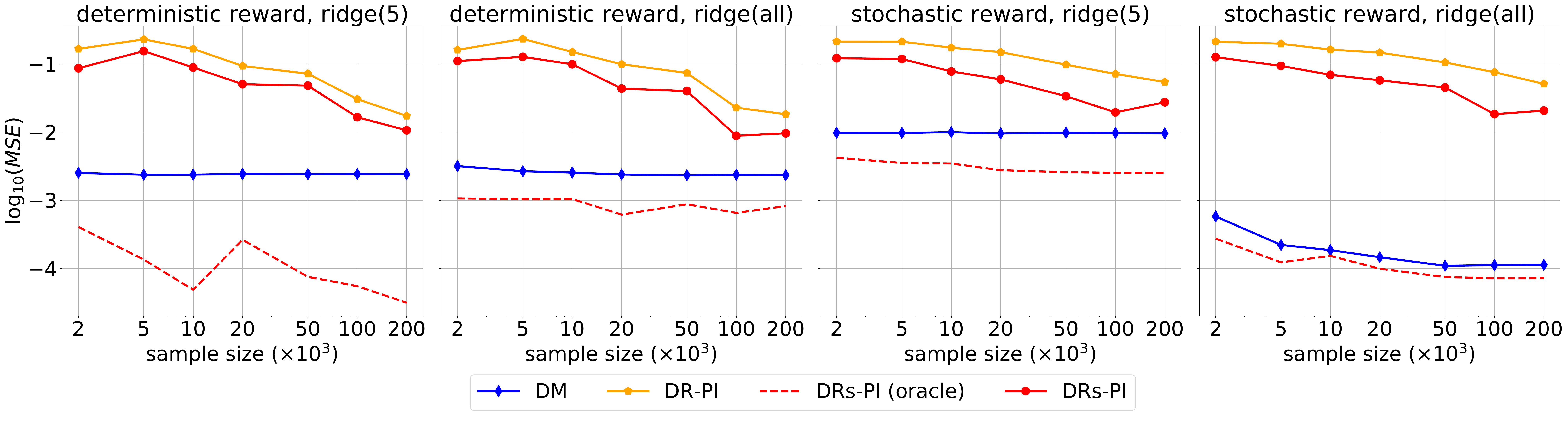}
\vspace{-0.8cm}
\caption{\emph{Off-policy evaluation with combinatorial actions.} MSE as a function of sample sizes for two reward distributions (deterministic and stochastic), and two reward predictors (ridge regression with five and all features). The MSE of DR-PI is significantly larger than DR-PIs in all cases (with $p$-value below 0.013 according to a paired $t$-test).}
\label{fig:slate}
\vspace{-0.2cm}
\end{figure*}

\subsubsection{Off-policy Learning}
\begin{figure}
\begin{center}
\includegraphics[width=\columnwidth]{./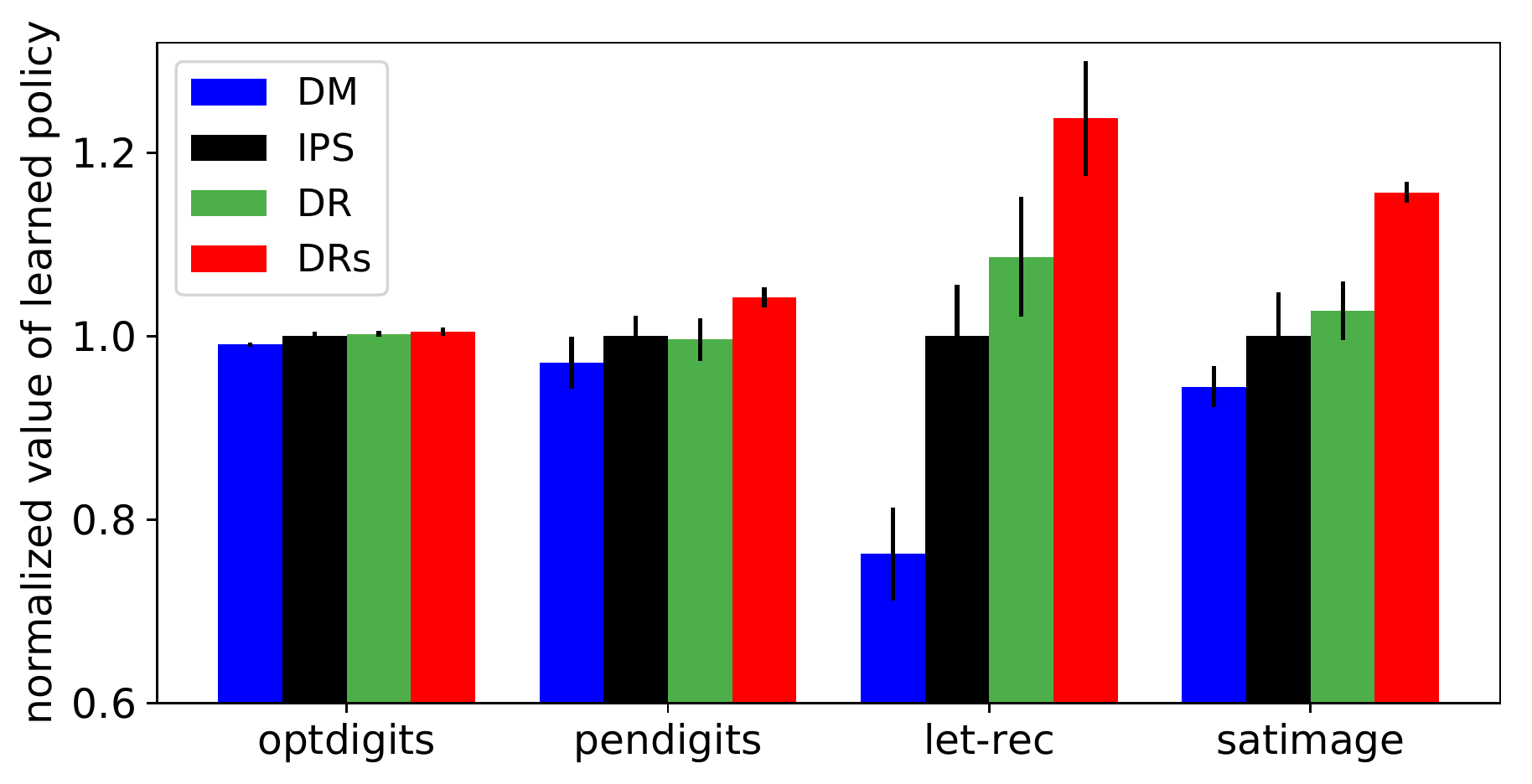}
\vspace{-0.75cm}
\caption{\emph{Off-policy learning experiments.}}
\label{fig:learning}
\vspace{-0.5cm}
\end{center}
\end{figure}
Following prior work~\cite{swaminathan2015counterfactual, swaminathan2015self,
  su2018cab}, we learn a stochastic linear policy $\pi_\ub$ where
$\pi_\ub(a \given x) \propto \exp\bigBraces{\ub^\top \fb(x,a)}$ and $\fb(x,a)$
is a featurization of context-action pairs. We solve $\ell_2$-regularized empirical risk
minimization $\hat{\ub} = \argmin_{\ub}\bigBracks{-\hat{V}(\pi_{\ub}) +
\gamma \norm{\ub}^2}$ via gradient descent, where $\smash{\hat{V}}$ is a
policy-value estimator and $\gamma>0$ is a hyperparameter. For these
experiments, we partition the data into four quarters: one
full-information segment for training the logging policy and as a test set,
and three bandit segments for (1) training reward predictors, (2)
learning the policy, and (3) hyperparameter tuning and model selection. The
logging policy is $\pi_{1,(0.9,0)}$ and since there is no fixed target
policy, we consider three reward predictors: $\hat{\eta} \equiv 0$,
and $\hat{\eta}$ trained with $z =
1/\mu(a\given x)$ and $z = 1/\mu(a\given x)^2$.\looseness=-1

In~\pref{fig:learning}, we show the performance of four methods (DM,
DR, IPS, and \drsdirect) on four of the UCI datasets. For each method,
we compute the average value of the learned policy on the test set
(averaged over 10 replicates) and report this value normalized by
the average value for IPS. For DM and DR, we select the hyperparameter $\gamma$ and reward predictor optimally in hindsight, while for DRs we use our model selection. Note that we do not compare with \switch here as
it is not amenable to gradient-based
optimization~\cite{su2018cab}. We find that off-policy learning
using \drsdirect always outperforms the baselines, with the
exception of the \emph{optdigits} dataset,
where all the methods perform similarly.\looseness=-1

\subsection{Combinatorial Setting}
\label{sec:exp-slates}

We empirically evaluate the performance of shrinkage-based estimator in the ranking problem
introduced in \pref{ex:slate}. Following~\citet{swaminathan2017off},
we generate contextual bandit data from the fully labeled MSLR-WEB10K dataset~\cite{letor2013}. The dataset has 10K queries, with up to 1251
judged documents for each query. The contexts $x$ are the
queries and actions $\ab$ represent lists of documents. For each
query $x$ and document $i$, the dataset contains a relevance judgement
$\rel(x,i)\in\{0,1,2,3,4\}$. We consider two types of rewards: \emph{deterministic rewards}, $r=\ndcg(x,\ab)$ (see definition in \pref{ex:slate}); and \emph{stochastic rewards}, where $r$ is drawn from a Bernoulli distribution with $p=0.25+0.5\cdot\ndcg(x,\ab)$.
We use data for $10\%$ of the queries to train relevance predictors used to define logging and target policies; the remaining data is used for the bandit protocol. The ground truth is determined using all the data.\looseness=-1

\textbf{Policies.} Each query-document pair $(x,i)$ is described by a feature vector $\fb(x,i)$,
partitioned into title and body features, denoted $\fb_t$ and $\fb_b$. We train two regression models to predict relevance: a lasso model $\lasso_b$ based on $\fb_b$, and a tree model $\tree_t$ based on $\fb_t$. The model $\lasso_b$ is used to select
the top 20 scoring documents; the action $\ab$ is a list of 5 documents out of these 20. In the notation of \pref{ex:slate}, $m=20$, $\ell=5$.
The target policy is deterministic and chooses $\ab$ that lists top 5 documents according to $\tree_t$.
The logging policy is supported on a basis $\Bcal_x\subseteq\Acal$ for each $x$. The basis contains the ``greedy action'' that lists top 5 documents according to $\lasso_b$ as well as actions
obtained by replacing items on the top position and up to two additional positions of the greedy action, resulting in the total of 96 elements in $\Bcal_x$ (see Appendix~\ref{app:exp_slates} for details).
The logging policy is $\epsilon$-greedy:
on each context, $\epsilon$ is drawn uniformly from the set
$\set{2^{-1},2^{-2},2^{-3},2^{-4},2^{-5}}$ and is included as part of the context, creating a
skew in the importance weights $\wb(x)^\top\ab$.\looseness=-1


\textbf{Reward predictors.} We consider two reward predictors $\heta$ trained on logged data. Both are trained via ridge regression, but differ in feature sets they consider: \textit{ridge(all)} is trained on all features, \textit{ridge(5)} is trained on the five features that are most correlated with the reward.

\textbf{Baselines.} We compare our method (DRs-PI) with DM and DR-PI.\footnote{DR-PI dominates self-normalized version of DR-PI as well as the standard pseudo-inverse estimator (i.e., with $\heta\equiv 0$).} In DRs-PI we select the hyperparameter $\lambda$ from
a geometrically spaced grid using our model selection procedure with the empirical version of Eq.~\eqref{eq:bias} in place of bias bound and also consider the oracle tuning of $\lambda$ from the same grid (details in Appendix~\ref{app:exp_slates}).

\textbf{Results and discussion.} In~\pref{fig:slate} we show the MSE of all the methods as a function of sample size, averaged over 20 replicates. Across all conditions, DRs-PI outperforms DR by a factor of 1.5 or more
(note that MSE is reported on log scale). A more striking result is the superior quality of the oracle-tuned DRs-PI.
It shows that the shrinkage strategy is highly effective in achieving a good bias--variance trade-off, but to unlock its potential in combinatorial settings requires improvements in model selection.

\section{Conclusion}
In this paper, we have derived shrinkage-based doubly-robust estimators for
off-policy evaluation using a principled optimization-based
framework. Our approach recovers the weight-clipping estimator from
prior work and also yields novel optimistic shrinkage estimators for
both atomic and combinatorial settings. Extensive experiments
demonstrate the efficacy of these estimators and highlight the role of
model selection in achieving good performance. Thus, the next step is
to develop model selection procedures for off-policy evaluation that
can close the gap with oracle tuning. We look forward to pursuing this
direction in future work.

\section*{Acknowledgements}
This work was partially completed during Yi's and Maria's internships at Microsoft Research. Yi is also supported by the Bloomberg Data Science Fellowship. All content represents the opinion of the authors, which is not necessarily shared or endorsed by their respective employers or sponsors.

\bibliography{refs}

\begin{thebibliography}{31}
\providecommand{\natexlab}[1]{#1}
\providecommand{\url}[1]{\texttt{#1}}
\expandafter\ifx\csname urlstyle\endcsname\relax
  \providecommand{\doi}[1]{doi: #1}\else
  \providecommand{\doi}{doi: \begingroup \urlstyle{rm}\Url}\fi

\bibitem[Awerbuch \& Kleinberg(2008)Awerbuch and Kleinberg]{AwerbuchKl08}
Awerbuch, B. and Kleinberg, R.
\newblock Online linear optimization and adaptive routing.
\newblock \emph{J. Comput. Syst. Sci.}, 74\penalty0 (1):\penalty0 97--114,
  2008.

\bibitem[Bang \& Robins(2005)Bang and Robins]{bang2005doubly}
Bang, H. and Robins, J.~M.
\newblock Doubly robust estimation in missing data and causal inference models.
\newblock \emph{Biometrics}, 2005.

\bibitem[Bembom \& van~der Laan(2008)Bembom and van~der Laan]{bembom2008data}
Bembom, O. and van~der Laan, M.~J.
\newblock Data-adaptive selection of the truncation level for
  inverse-probability-of-treatment-weighted estimators.
\newblock Technical report, UC Berkeley, 2008.

\bibitem[Bottou et~al.(2013)Bottou, Peters, Qui{\~n}onero-Candela, Charles,
  Chickering, Portugaly, Ray, Simard, and Snelson]{bottou2013counterfactual}
Bottou, L., Peters, J., Qui{\~n}onero-Candela, J., Charles, D.~X., Chickering,
  D.~M., Portugaly, E., Ray, D., Simard, P., and Snelson, E.
\newblock Counterfactual reasoning and learning systems: The example of
  computational advertising.
\newblock \emph{The Journal of Machine Learning Research}, 2013.

\bibitem[Cesa-Bianchi \& Lugosi(2012)Cesa-Bianchi and
  Lugosi]{cesa2012combinatorial}
Cesa-Bianchi, N. and Lugosi, G.
\newblock Combinatorial bandits.
\newblock \emph{Journal of Computer and System Sciences}, 2012.

\bibitem[Dani et~al.(2008)Dani, Hayes, and Kakade]{DaniHaKa08}
Dani, V., Hayes, T.~P., and Kakade, S.~M.
\newblock The price of bandit information for online optimization.
\newblock In \emph{Advances in Neural Information Processing Systems}, 2008.

\bibitem[De~la Pena \& Gin{\'e}(2012)De~la Pena and Gin{\'e}]{de2012decoupling}
De~la Pena, V. and Gin{\'e}, E.
\newblock \emph{Decoupling: from dependence to independence}.
\newblock Springer Science \& Business Media, 2012.

\bibitem[Dua \& Graff(2017)Dua and Graff]{Dua:2019}
Dua, D. and Graff, C.
\newblock {UCI} machine learning repository, 2017.
\newblock URL \url{http://archive.ics.uci.edu/ml}.

\bibitem[Dud{\'\i}k et~al.(2011)Dud{\'\i}k, Langford, and Li]{dudik2011doubly}
Dud{\'\i}k, M., Langford, J., and Li, L.
\newblock Doubly robust policy evaluation and learning.
\newblock In \emph{International Conference on Machine Learning}, 2011.

\bibitem[Dud{\'\i}k et~al.(2014)Dud{\'\i}k, Erhan, Langford, Li,
  et~al.]{dudik2014doubly}
Dud{\'\i}k, M., Erhan, D., Langford, J., Li, L., et~al.
\newblock Doubly robust policy evaluation and optimization.
\newblock \emph{Statistical Science}, 2014.

\bibitem[Farajtabar et~al.(2018)Farajtabar, Chow, and
  Ghavamzadeh]{farajtabar2018more}
Farajtabar, M., Chow, Y., and Ghavamzadeh, M.
\newblock More robust doubly robust off-policy evaluation.
\newblock In \emph{International Conference on Machine Learning}, 2018.

\bibitem[Hahn(1998)]{hahn1998role}
Hahn, J.
\newblock On the role of the propensity score in efficient semiparametric
  estimation of average treatment effects.
\newblock \emph{Econometrica}, 1998.

\bibitem[Hirano et~al.(2003)Hirano, Imbens, and Ridder]{hirano2003efficient}
Hirano, K., Imbens, G.~W., and Ridder, G.
\newblock Efficient estimation of average treatment effects using the estimated
  propensity score.
\newblock \emph{Econometrica}, 2003.

\bibitem[Imbens et~al.(2007)Imbens, Newey, and Ridder]{imbens2007mean}
Imbens, G., Newey, W., and Ridder, G.
\newblock Mean-squared-error calculations for average treatment effects.
\newblock \emph{ssrn.954748}, 2007.

\bibitem[Kallus(2017)]{Kallus17}
Kallus, N.
\newblock {A Framework for Optimal Matching for Causal Inference}.
\newblock In \emph{International Conference on Artificial Intelligence and
  Statistics}, 2017.

\bibitem[Kallus(2018)]{Kallus18}
Kallus, N.
\newblock Balanced policy evaluation and learning.
\newblock In \emph{Advances in Neural Information Processing Systems}, 2018.

\bibitem[Kang et~al.(2007)Kang, Schafer, et~al.]{kang2007demystifying}
Kang, J.~D., Schafer, J.~L., et~al.
\newblock Demystifying double robustness: A comparison of alternative
  strategies for estimating a population mean from incomplete data.
\newblock \emph{Statistical science}, 2007.

\bibitem[Langford \& Zhang(2008)Langford and Zhang]{langford2008epoch}
Langford, J. and Zhang, T.
\newblock The epoch-greedy algorithm for multi-armed bandits with side
  information.
\newblock In \emph{Advances in Neural Information Processing Systems}, 2008.

\bibitem[Li et~al.(2011)Li, Chu, Langford, and Wang]{li2011unbiased}
Li, L., Chu, W., Langford, J., and Wang, X.
\newblock Unbiased offline evaluation of contextual-bandit-based news article
  recommendation algorithms.
\newblock In \emph{International Conference on Web Search and Data Mining},
  2011.

\bibitem[Li et~al.(2015)Li, Chen, Kleban, and Gupta]{li2015counterfactual}
Li, L., Chen, S., Kleban, J., and Gupta, A.
\newblock Counterfactual estimation and optimization of click metrics in search
  engines: A case study.
\newblock In \emph{International Conference on World Wide Web}, 2015.

\bibitem[Qin \& Liu(2013)Qin and Liu]{letor2013}
Qin, T. and Liu, T.
\newblock Introducing {LETOR} 4.0 datasets.
\newblock \emph{arXiv:1306.2597}, 2013.

\bibitem[Robins \& Rotnitzky(1995)Robins and
  Rotnitzky]{robins1995semiparametric}
Robins, J.~M. and Rotnitzky, A.
\newblock Semiparametric efficiency in multivariate regression models with
  missing data.
\newblock \emph{Journal of the American Statistical Association}, 1995.

\bibitem[Rothe(2016)]{rothe2016value}
Rothe, C.
\newblock The value of knowing the propensity score for estimating average
  treatment effects.
\newblock \emph{IZA Discussion Paper Series}, 2016.

\bibitem[Strehl et~al.(2010)Strehl, Langford, Li, and
  Kakade]{strehl2010learning}
Strehl, A., Langford, J., Li, L., and Kakade, S.~M.
\newblock Learning from logged implicit exploration data.
\newblock In \emph{Advances in Neural Information Processing Systems}, 2010.

\bibitem[Su et~al.(2018)Su, Wang, Santacatterina, and Joachims]{su2018cab}
Su, Y., Wang, L., Santacatterina, M., and Joachims, T.
\newblock Cab: Continuous adaptive blending estimator for policy evaluation and
  learning.
\newblock In \emph{International Conference on Machine Learning}, 2018.

\bibitem[Swaminathan \& Joachims(2015{\natexlab{a}})Swaminathan and
  Joachims]{swaminathan2015counterfactual}
Swaminathan, A. and Joachims, T.
\newblock Counterfactual risk minimization: Learning from logged bandit
  feedback.
\newblock In \emph{International Conference on Machine Learning},
  2015{\natexlab{a}}.

\bibitem[Swaminathan \& Joachims(2015{\natexlab{b}})Swaminathan and
  Joachims]{swaminathan2015self}
Swaminathan, A. and Joachims, T.
\newblock The self-normalized estimator for counterfactual learning.
\newblock In \emph{Advances in Neural Information Processing Systems},
  2015{\natexlab{b}}.

\bibitem[Swaminathan et~al.(2017)Swaminathan, Krishnamurthy, Agarwal, Dudik,
  Langford, Jose, and Zitouni]{swaminathan2017off}
Swaminathan, A., Krishnamurthy, A., Agarwal, A., Dudik, M., Langford, J., Jose,
  D., and Zitouni, I.
\newblock Off-policy evaluation for slate recommendation.
\newblock In \emph{Advances in Neural Information Processing Systems}, 2017.

\bibitem[Thomas \& Brunskill(2016)Thomas and Brunskill]{thomas2016data}
Thomas, P. and Brunskill, E.
\newblock Data-efficient off-policy policy evaluation for reinforcement
  learning.
\newblock In \emph{International Conference on Machine Learning}, 2016.

\bibitem[Wang et~al.(2017)Wang, Agarwal, and Dudik]{wang2017optimal}
Wang, Y.-X., Agarwal, A., and Dudik, M.
\newblock Optimal and adaptive off-policy evaluation in contextual bandits.
\newblock In \emph{International Conference on Machine Learning}, 2017.

\bibitem[Zhou et~al.(2017)Zhou, Mayer-Hamblett, Khan, and
  Kosorok]{zhou2017residual}
Zhou, X., Mayer-Hamblett, N., Khan, U., and Kosorok, M.~R.
\newblock Residual weighted learning for estimating individualized treatment
  rules.
\newblock \emph{Journal of the American Statistical Association}, 2017.

\end{thebibliography}
\bibliographystyle{icml2020}

\vfill
\newpage

\appendix
\onecolumn
\section{Derivation of Shrinkage Estimators for Non-combinatorial Setting}
\label{app:examples}
In this section we provide detailed derivations for the two
estimators in the non-combinatorial setting.


We first derive the pessimistic version. Recall that the optimization
problem decouples across $(x,a)$, so we focus on a single $(x,a)$
pair such that $\mu(a\given x)>0$ since only such pairs can appear
in the data. For conciseness, we omit the dependence on $(x,a)$ and simply write $w=w(x,a)$, $\hw=\hw(x,a)$ and
$\mu=\mu(a\given x)$. Fixing $\lambda \geq
0$, we must solve
\begin{align*}
\Minimize_{\hw\in\RR}
\;
\BigBracks{\mu\hw^2 + 2\lambda \abs{\mu (\hw - w)}}. 
\end{align*}
(Note that we allow any $\hw\in\RR$, but we will see that the solution will actually satisfy $0\le\hw\le w$.)
Since $\mu>0$, this minimization problem is strongly convex and therefore has a unique minimizer.
By first-order optimality, $\hw$ is a minimizer if and only if
\begin{align}
\label{eq:pes-optimality}
2 \mu \hw + 2\lambda\mu v = 0
\qquad\text{and}\qquad
v \in \partial \abs{\hw - w} = \left\{
\begin{aligned}
1 & \textrm{ if } & \hw > w,\\
[-1, 1] & \textrm{ if } & \hw = w,\\
-1 & \textrm{ if } & \hw < w.
\end{aligned}\right.
\end{align}
Since $\mu>0$, the first equation can be rewritten as
\[
   \hw = -\lambda v.
\]
Now a simple case analysis shows that if $w>\lambda$ then the choice $\hw=\lambda$, $v=-1$ satisfies
Eq.~\eqref{eq:pes-optimality}, and if $0\le w\le\lambda$ then the choice $\hw=w$, $v=-w/\lambda$ satisfies Eq.~\eqref{eq:pes-optimality},
yielding
\begin{align*}
\hw_{\pes,\lambda}(x,a) = \min\{\lambda, w(x,a)\},
\end{align*}
which is the clipped estimator.

For the optimistic version, the optimization problem is
\begin{align*}
\Minimize_{\hw\in\RR}
\;
\BigBracks{\mu \hw^2 w^2/z + \lambda \mu (\hw - w)^2/z},
\end{align*}
where $z=z(x,a)$.
The optimality
conditions are
\begin{align*}
2 \mu w^2\hw/z + 2 \lambda \mu(\hw - w)/z = 0.
\end{align*}
This gives the optimistic estimator
\begin{align*}
\hw_{\opt,\lambda}(x,a) = \frac{\lambda}{w(x,a)^2 + \lambda} w(x,a).
\end{align*}
Notice that this estimator does not depend on the weighting function $z$, so it
does not depend on how we train the regression model.

\section{Derivation of the Shrinkage Estimator for Combinatorial Setting}
\label{app:slates}

We provide a complete derivation of the shrinkage estimator for combinatorial
actions. We use the notation $w(x,\ab)=\wb(x)^\top\ab$.
We assume that the regression model takes form $\heta(x,\ab)=\hetab(x)^\top\ab$, satisfies $\heta(x,\ab)\in[0,1]$, and is trained to minimize
\begin{align*}
L(\hetab) \defeq \EE_{\mu}\sbr{ z(x,\ab) (r - \hetab(x)^\top\ab)^2}
\end{align*}
for some $z(x,\ab)>0$.
We assume that the linearity
assumption holds, so we can write $\eta(x,\ab)=\etab(x)^\top\ab$. And we also
assume that
$
  \linspan\bigParens{\supp\pi(\cdot\given x)}
  \subseteq
  \linspan\bigParens{\supp\mu(\cdot\given x)}
$, so, as shown by~\citet{swaminathan2017off}, the pseudo-inverse estimator
is unbiased:
\begin{equation}
\label{eq:app-bias}
  \EE_{(x,\ab,r)\sim\mu}\BigBracks{\hetab(x)^\top\qb_{\pi,x}+w(x,\ab)\bigParens{r-\hetab(x)^\top\ab}}
  =\EE_{(x,\ab,r)\sim\pi}[r].
\end{equation}
Therefore, if we replace $w$ by an arbitrary function $\hw$ (not necessarily linear in $\ab$), we
obtain the expression for the bias
\begin{equation}
\bias(\hw) = \EE_{\mu}\sbr{\bigParens{\hw(x,\ab) - w(x,\ab)}(r - \hetab(x)^\top\ab)}.
\end{equation}
Using Cauchy--Schwarz inequality, we can bound the bias in terms of $L(\hetab)$:
\begin{align*}
\bias(\hw)
\leq \sqrt{\EE_\mu\sbr{\bigParens{\hw(x,\ab) - w(x,\ab)}^2/z(x,\ab)}} \cdot \sqrt{L(\hetab)}.
\end{align*}

For the variance bound, we begin with a proxy based on \pref{prop:variance_pi}, and then bound it
using Cauchy--Schwarz inequality, the fact that $\hetab(x)^\top\ab$ and $r$ are bounded in $[0,1]$, and an additional assumption that $\abs{\hw(x,\ab)}\le\abs{w(x,\ab)}$ (which
we will show is true for the specific optimistic estimator that we derive below):
\begin{align*}
\Var(\hw) &\approx
\frac{1}{n}\EE_\mu\sbr{\hw(x,\ab)^2\bigParens{r - \hetab(x)^\top\ab}^2}
\leq \frac{1}{n}\sqrt{\EE_{\mu}\bigBracks{ \hw(x,\ab)^4/z(x,\ab)}}\cdot\sqrt{\EE_\mu\sbr{z(x,\ab)\bigParens{r - \hetab(x)^\top\ab}^4}}\\
& \leq \frac{1}{n}\sqrt{\EE_\mu\BigBracks{ \hw(x,\ab)^2 w(x,\ab)^2/z(x,\ab)}}\cdot \sqrt{L(\hetab)}.
\end{align*}

Similar to non-combinatorial setting, the solutions of the resulting MSE bound must lie on the Pareto front parameterized by a single scalar
$\lambda \in [0,\infty]$:
\begin{align*}
\Minimize_{\hw}\ \lambda \EE_\mu\sbr{\tfrac{1}{z(x,\ab)}\bigParens{\hw(x,\ab) - w(x,\ab)}^2} + \EE_{\mu}\sbr{\tfrac{w(x,\ab)^2}{z(x,\ab)}\hw(x,\ab)^2 }.
\end{align*}
This decomposes across $(x,\ab)$ and by first-order optimality, we obtain the same solution as in non-combinatorial setting:
\[
  \hw(x,\ab) = \frac{\lambda}{w(x,\ab)^2 + \lambda} w(x,\ab).
\]
Note that these weights satisfy $\abs{\hw(x,\ab)}\le\abs{w(x,\ab)}$, and $\hw$ matches the sign of $w$, so
in fact a stronger shrinkage property holds: $\hw(x,\ab)=c(x,\ab)w(x,\ab)$ where $c(x,\ab)\in[0,1]$.  Also
note that when $\mu$ is supported on a linearly independent set of actions for any given
$x$, then we can pick $\hwb(x)\in\RR^d$ to satisfy $\hw(x,\ab)=\hwb(x)^\top\ab$ across all actions in the support of $\mu(\cdot\given x)$, thus
satisfying the assumptions of \pref{prop:variance_pi}.
Plugging the expression for $\hw$ back
into the pseudo-inverse estimator yields
\begin{align*}
\vdrsopi(\pi;\hetab,\lambda)
\defeq\frac{1}{n}\sum_{i=1}^n \hetab(x_i)^\top\qb_{\pi,x_i} + \rbr{\frac{\lambda}{\lambda + (\wb(x_i)^\top\ab_i)^2}}\wb(x_i)^\top\ab_i (r_i - \hetab(x_i)^\top\ab_i).
\end{align*}

\section{Proofs}

\subsection{Proof of~\pref{prop:variance}}

The law of total variance gives
\begin{align*}
\Var(\hat{w}) &= \frac{1}{n}\Var_{x,a,r\sim\mu} \rbr{ \sum_{a' \in \Acal} \pi(a'\given x)\heta(x,a') + \hw(x,a)(r - \heta(x,a))}\\
& = \frac1n\underbrace{ \EE_{x}\Var_{a,r \sim\mu} \rbr{ \sum_{a' \in \Acal} \pi(a'\given x)\heta(x,a') + \hw(x,a)(r - \heta(x,a))}}_{\eqdef T_1}\\
&\quad{} + \frac1n\underbrace{\Var_{x} \EE_{a,r \sim\mu} \rbr{ \sum_{a' \in \Acal} \pi(a'\given x)\heta(x,a') + \hw(x,a)(r - \heta(x,a))}}_{\eqdef T_2}.
\end{align*}
For $T_1$, since $\sum_{a' \in \Acal} \pi(a' \given
x)\heta(x,a')$ does not depend on $a,r$, it does not contribute to the
conditional variance, and we get
\begin{align*}
T_1 & = \EE_x \Var_{a,r} \rbr{\hat{w}(x,a)(r - \heta(x,a))} = \EE_{x,a,r} \sbr{\hw(x,a)^2(r - \heta(x,a))^2 } - \EE_x \sbr{\EE_{a,r}\sbr{\hw(x,a)(r - \heta(x,a))}^2}.
\end{align*}
The first term is our variance proxy. To bound the second term, write $\hw(x,a)=c(x,a)w(x,a)$ for some $c(x,a)\in[0,1]$, which is possible since $0\le w\le\hw$ by assumption. The second term
can then be rewritten and bounded as
\begin{align*}
0\le\EE_x \sbr{\EE_{a,r\sim\mu}\sbr{\hw(x,a)\bigParens{r - \heta(x,a)}}^2}
  &= \EE_x \sbr{\EE_{a,r\sim\mu}\sbr{w(x,a)c(x,a)\bigParens{r - \heta(x,a)}}^2}
\\
  &= \EE_x \sbr{\EE_{a,r\sim\pi}\sbr{c(x,a)\bigParens{r - \heta(x,a)}}^2}
  \leq 1,
\end{align*}
where the second equality follows by the unbiasedness of inverse-propensity scoring,
and the final bound follows because $c(x,a), r, \heta(x,a)\in[0,1]$.


For $T_2$, of course we have $T_2 \geq 0$, and further
\begin{align*}
T_2
&=
\Var_{x} \Bracks{\EE_{a\sim\pi}\bigBracks{\heta(x,a)}
                +\EE_{a\sim\mu}\bigBracks{\hw(x,a)\bigParens{\eta(x,a)-\heta(x,a)}}}
\\
&
\le
\E_{x} \Bracks{\Parens{\EE_{a\sim\pi}[\heta(x,a)]
                +\EE_{a\sim\mu}\bigBracks{\hw(x,a)\bigParens{\eta(x,a)-\heta(x,a)}}}^2}
\\
&
=
\E_{x} \Bracks{\Parens{\EE_{a\sim\pi}[\heta(x,a)]
                +\EE_{a\sim\mu}\bigBracks{w(x,a)c(x,a)\bigParens{\eta(x,a)-\heta(x,a)}}}^2}
\\
&
=
\E_{x} \Bracks{\EE_{a\sim\pi}\BigBracks{\heta(x,a)+ c(x,a)\bigParens{\eta(x,a)-\heta(x,a)}}^2}
\\
&
=
\E_{x} \Bracks{\EE_{a\sim\pi}\BigBracks{\bigParens{1-c(x,a)}\heta(x,a)+ c(x,a)\eta(x,a)}^2}
\le 1,
\end{align*}
where we again write $\hw(x,a)=c(x,a)w(x,a)$ for some $c(x,a)\in[0,1]$, then appeal to the
unbiasedness of the inverse-propensity scoring, and finally use the bounds
$c(x,a),\eta(x,a),\hat{\eta}(x,a) \in [0,1]$.

Therefore, we can write
\begin{align*}
\Var(\hw) - \frac1n\EE_{x,a,r} \sbr{\hw(x,a)^2(r - \heta(x,a))^2 } = -\frac1n\EE_x\sbr{\EE_{a,r}\sbr{\hw(x,a)(r - \heta(x,a))}^2} + \frac1n T_2,
\end{align*}
and we have just shown that the right hand side is in $\bigBracks{-\frac1n,\frac1n}$. This
proves the proposition.


\subsection{Proof of \pref{prop:variance_pi}}

We begin by deriving a simple expression for the
pseudo-inverse $\Gamma_{\mu,x}^\dagger$. Consider a fixed $x$
and let $s=\card{\Bcal_x}$ be the size of the basis $\Bcal_x$ (note that
$s$ might be a function of $x$). Let $D_{\mu,x}\in\RR^{s\times s}$ denote
the diagonal matrix $D_{\mu,x}=\diag\set{\mu(\ab\given x)}_{\ab\in\Bcal_x}$.
Recall that $B_x$ is the matrix with $\ab\in\Bcal_x$ in its columns.
The matrix $\Gamma_{\mu,x}$ can then be written as
\[
  \Gamma_{\mu,x}=B_x D_{\mu,x} B_x^\top.
\]
To obtain its psedo-inverse, we use tho following fact:
\begin{fact}
Let $B\in\RR^{d\times s}$ be a matrix with linearly independent columns and
let $K=B^\top B$. Then for any invertible diagonal matrix $D \in \RR^{s \times s}$, we have
\begin{align*}
(BDB^\top)^\dagger = BK^{-1}D^{-1}K^{-1}B^\top,
\end{align*}
where $K^{-1}$ is well defined thanks to the linear independence of columns of $B$.
\end{fact}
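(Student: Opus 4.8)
The plan is to prove the identity directly from the Moore--Penrose characterization: I will exhibit the candidate $H \defeq BK^{-1}D^{-1}K^{-1}B^\top$ and verify that it satisfies the four Penrose conditions for $G \defeq BDB^\top$, which together uniquely pin down $G^\dagger$. First I would check that $H$ is even well-defined: since the columns of $B$ are linearly independent, the Gram matrix $K = B^\top B$ has full rank and is therefore symmetric positive definite, hence invertible, while $D^{-1}$ exists by hypothesis. The only algebraic identity I will ever use is $B^\top B = K$, which causes the relevant matrix products to telescope.

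I would begin with the two cheap symmetry conditions. Both $G$ and $H$ are symmetric: $G^\top = B D^\top B^\top = G$ because $D$ is diagonal, and $H^\top = B(K^{-1})^\top(D^{-1})^\top(K^{-1})^\top B^\top = H$ because $K^{-1}$ and $D^{-1}$ are symmetric. The real content is the computation of the products $GH$ and $HG$, where inserting $B^\top B = K$ collapses everything:
\begin{align*}
GH &= BDB^\top B\,K^{-1}D^{-1}K^{-1}B^\top = BD\,K\,K^{-1}D^{-1}K^{-1}B^\top = BK^{-1}B^\top,\\
HG &= BK^{-1}D^{-1}K^{-1}B^\top B\,DB^\top = BK^{-1}D^{-1}K^{-1}K\,DB^\top = BK^{-1}B^\top.
\end{align*}
Thus $GH = HG = BK^{-1}B^\top$, and this matrix is manifestly symmetric, so the symmetry conditions $(GH)^\top = GH$ and $(HG)^\top = HG$ hold for free.

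It then remains to verify the two ``reflexive'' conditions. Using the just-computed form of $GH$ and $HG$ together with $B^\top B = K$ once more, I expect $GHG = BK^{-1}B^\top\cdot BDB^\top = BK^{-1}K\,DB^\top = BDB^\top = G$ and $HGH = BK^{-1}B^\top\cdot BK^{-1}D^{-1}K^{-1}B^\top = BK^{-1}K\,K^{-1}D^{-1}K^{-1}B^\top = H$. With all four Penrose conditions established, uniqueness of the Moore--Penrose pseudo-inverse yields $G^\dagger = H$, as claimed.

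There is no genuine obstacle beyond careful bookkeeping; the one point worth flagging is that verifying only $GHG = G$ and $HGH = H$ would show that $H$ is a reflexive generalized inverse but \emph{not} necessarily the Moore--Penrose pseudo-inverse. The symmetry conditions are what upgrade this to the true pseudo-inverse, and they come essentially for free once one observes that $GH = HG = BK^{-1}B^\top$.
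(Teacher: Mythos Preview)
Your proof is correct and follows the same approach as the paper: verify the Penrose conditions by direct computation, using $B^\top B = K$ to collapse the products. In fact you are more careful than the paper, which only checks $GG'G = G$ and $G'GG' = G'$ without addressing the symmetry conditions you rightly flag as necessary; your observation that $GH = HG = BK^{-1}B^\top$ makes those conditions transparent.
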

\begin{proof}
Let $G \defeq B DB^\top$ and $G'\defeq BK^{-1}D^{-1}K^{-1}B^\top$. To show that $G^\dagger=G'$, it
suffices to argue that $GG'G=G$ and $G'GG'=G'$:
\begin{align*}
GG'G &= B DB^\top B K^{-1} D^{-1} K^{-1} B^\top B D B^\top = B D B^\top = G,\\
G'G G' &= BK^{-1}D^{-1} K^{-1}B^\top BD B^\top BK^{-1}D^{-1} K^{-1}B^\top = BK^{-1}D^{-1} K^{-1}B^\top = G'.\tag*\qedhere
\end{align*}
\end{proof}
Using this fact, we thus have
\begin{equation}
\label{eq:gamma-dagger}
  \Gamma_{\mu,x}^\dagger=B_x K_x^{-1}D_{\mu,x}^{-1}K_x^{-1}B_x^\top,
\end{equation}
where $K_x=B_x^\top B_x$.

We are now ready to start the proof of \pref{prop:variance_pi}.
Similarly to the proof of~\pref{prop:variance}, we first apply the law of total variance
\begin{align*}
n \Var(\hw)
&= \Var_{x,\ab,r\sim\mu}\rbr{ \hetab(x)^\top\qb_{\pi,x} + \hwb(x)^\top\ab(r - \hetab(x)^\top\ab)}
\\
& = \underbrace{\EE_{x} \Var_{\ab,r \sim \mu}\rbr{ \hetab(x)^\top\qb_{\pi,x} + \hwb(x)^\top\ab(r - \hetab(x)^\top\ab)}}_{=: T_1} + \underbrace{\Var_x \EE_{\ab,r\sim\mu}\rbr{ \hetab(x)^\top\qb_{\pi,x} + \hwb(x)^\top\ab(r - \hetab(x)^\top\ab)}}_{=: T_2}.
\end{align*}
To analyze $T_2$, we first rewrite and bound the inner expectation for a fixed $x$. We drop the dependence on
$x$ from the notation and write $\hetab=\hetab(x)$, $\qb_\pi=\qb_{\pi,x}$, $\hwb=\hwb(x)$, $\etab=\etab(x)$, and $\Gamma_\mu=\Gamma_{\mu,x}$:
\begin{align}
\notag
&\EE_{\ab,r\sim\mu}\BigBracks{ \hetab^\top\qb_{\pi} + \hwb^\top\ab(r - \ab^\top\hetab)}
\\
\notag
&\qquad\qquad{}
=
\hetab^\top\qb_\pi
+
\EE_{\ab\sim\mu}\BigBracks{\hwb^\top\ab(\ab^\top\etab - \ab^\top\hetab)}
=
\hetab^\top\qb_\pi
+
\hwb^\top\Parens{\EE_{\ab\sim\mu}[\ab\ab^\top]}(\etab-\hetab)
=
\hetab^\top\qb_\pi
+
\hwb^\top\Gamma_{\mu}(\etab-\hetab)
\\
\label{eq:app-slates-1}
&\qquad\qquad{}
=
\hetab^\top B \vb_\pi
+
\hwb^\top(B D_\mu B^\top)(\etab-\hetab)
\end{align}
where in the last step we introduced shorthands $B=B_x$, $D_\mu=D_{\mu,x}$ and $\vb_\pi=\vb_{\pi,x}$.

To continue with the derivation, observe that
by assumption, we have $\hat{\wb}^\top\ab = c(x,\ab) \wb^\top\ab$ for all $\ab\in\Bcal_x$,
and so we can write $\hat{\wb}^\top B = \wb^\top B C$ where $C$ is a diagonal
matrix with entries $c(x,\ab)$ across $\ab\in\Bcal_x$. Next, using the fact that
$\wb=\Gamma_\mu^\dagger\qb_\pi$ and then plugging in Eq.~\eqref{eq:gamma-dagger}, we obtain
\begin{equation}
\label{eq:app-slates-2}
\hat{\wb}^\top B
= \wb^\top B C
= \qb_\pi^\top\Gamma_\mu^\dagger B C
= \vb_\pi^\top B^\top\BigParens{B K^{-1}D_{\mu}^{-1}K^{-1}B^\top} B C
= \vb_\pi^\top D_{\mu}^{-1} C,
\end{equation}
where we introduced the shorthand $K=K_x$. Now combining Eqs.~\eqref{eq:app-slates-1} and~\eqref{eq:app-slates-2}, we obtain
\begin{align}
\notag
\Abs{\EE_{\ab,r\sim\mu} \BigBracks{\hetab^\top\qb_\pi + \hat{\wb}^\top\ab(r - \ab^\top\hetab)}}
&
= \BigAbs{\vb_\pi^\top B^\top \hetab + \vb_\pi^\top D_{\mu}^{-1} C D_\mu B^\top(\etab - \hetab)}
\\
\notag
& = \Abs{\vb_\pi^\top\BigParens{B^\top \hetab + CB^\top(\etab -\hetab)}}
\\
\label{eq:app-slates-3}
& \leq \norm{\vb_\pi}_{1} \cdot \Norm{(I - C)B^\top \hetab + C B^\top\etab}_{\infty},
\end{align}
where in the last step we used Holder's inequality.
For the $\ell_{\infty}$ norm, we get
\begin{align*}
\Norm{(I - C)B^\top \hetab + C B^\top\etab}_{\infty}^2 &=
\max_{\ab\in\Bcal_x}
\BigAbs{\bigParens{1-c(x,\ab)}\heta(x,\ab) + c(x,\ab)\eta(x,\ab)}
\le 1,
\end{align*}
where the last step follows because $\eta(x,\ab),\heta(x,\ab),c(x,\ab)\in[0,1]$.
Therefore, we have that $0 \leq T_2 \leq \EE_x\bigBracks{\norm{\vb_{\pi,x}}_1^2}$.

To bound $T_1$, we first note that $\hetab(x)^\top\qb_{\pi,x}$ is independent of $\ab$ and $r$, and so it does not contribute to the variance, and so
\begin{align*}
T_1
= \EE_{x,\ab,r\sim\mu}\BigBracks{\rbr{\hat{\wb}^\top\ab(r - \ab^\top\hetab)}^2}
- \EE_{x}\BigBracks{\,\EE_{\ab,r\sim\mu}\sbr{\hat{\wb}^\top\ab(r - \ab^\top\hetab)}^2}.
\end{align*}
The second term satisfies
\[
0
\leq \EE_{x}\BigBracks{\,\EE_{\ab,r\sim\mu}\sbr{\hat{\wb}^\top\ab(r - \ab^\top\hetab)}^2}
\leq \EE_x \BigBracks{\norm{\vb_{\pi,x}}_1^2\cdot\bigNorm{CB^\top(\etab - \hetab)}_{\infty}^2}
\leq \EE_x \bigBracks{\norm{\vb_{\pi,x}}_1^2},
\]
where we applied similar reasoning as in Eq.~\eqref{eq:app-slates-3}. Combining
this bound with the bound on $T_2$ completes the proof:
\[
\BigAbs{
 n \Var(\hw)-
 \EE_{x,\ab,r\sim\mu}\BigBracks{\rbr{\hat{\wb}^\top\ab(r - \ab^\top\hetab)}^2}
}
\le
\EE_x \bigBracks{\norm{\vb_{\pi,x}}_1^2}.
\]

\subsection{Proof of~\pref{thm:model_selection}}

The main technical part of the proof is a deviation inequality for the
sample variance. For this, let us fix $\theta$, which we drop from
notation, and focus on estimating the variance
\begin{align*}
\Var(Z) = \EE[(Z - \EE(z))^2] \textrm{ with } \hvar = \frac{1}{2n(n-1)}\sum_{\substack{1\le i,j\le n\\i\ne j}}(Z_i - Z_j)^2.
\end{align*}
We have the following lemma
\begin{lemma}[Variance estimation]
\label{lem:variance_estimation}
Let $Z_1,\ldots,Z_n$ be iid random variables, and assume that $|Z_i|
\leq R$ almost surely. Then there exists a constant $C > 0$ such that
for any $\delta \in (0,1)$, with probability at least $1-\delta$
\begin{align*}
\abr{\Var(Z) - \hvar} \leq (C+3) \rbr{\sqrt{\frac{2 R^2 \Var(Z)\log(6C/\delta)}{n}} + \frac{2R^2\log(6C/\delta)}{3n}}.
\end{align*}
\end{lemma}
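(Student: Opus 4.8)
The plan is to recognize $\hvar$ as a degree-two U-statistic and push it through a Bernstein-type argument. First I would observe that $\hvar$ is the unbiased sample variance: with the symmetric kernel $h(z,z')\defeq\tfrac12(z-z')^2$ we have $\hvar=\binom{n}{2}^{-1}\sum_{i<j}h(Z_i,Z_j)$ and $\EE[h(Z,Z')]=\EE[Z^2]-(\EE[Z])^2=\Var(Z)$, so the task is to bound the deviation of a U-statistic from its mean. The two quantities feeding a Bernstein bound are the range and variance of the kernel. Since $|Z_i|\le R$, we have $h(Z,Z')\in[0,2R^2]$, so the centered kernel is bounded by $2R^2$; moreover, using $\EE[(Z-Z')^4]\le(2R)^2\,\EE[(Z-Z')^2]=8R^2\Var(Z)$ together with $h\ge 0$ gives $\Var\bigParens{h(Z,Z')}\le\EE[h^2]\le 2R^2\Var(Z)$. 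These are exactly the range $2R^2$ and the variance proxy $2R^2\Var(Z)$ that appear in the statement.

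Next I would reduce the U-statistic to an i.i.d.\ average via Hoeffding's averaging trick: $\hvar$ equals the average over all permutations $\sigma$ of the block averages $W_\sigma\defeq\tfrac1{\floors{n/2}}\sum_{k=1}^{\floors{n/2}}h\bigParens{Z_{\sigma(2k-1)},Z_{\sigma(2k)}}$, each $W_\sigma$ being an average of $\floors{n/2}$ i.i.d.\ kernel evaluations on disjoint pairs. Because $t\mapsto e^{st}$ is convex, Jensen's inequality shows that the moment generating function of $\hvar-\Var(Z)$ is dominated by that of a single $W_\sigma-\Var(Z)$, so any exponential tail bound for the i.i.d.\ average $W_\sigma$ transfers verbatim to $\hvar$. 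I would then apply the classical Bernstein inequality to $W_\sigma$, whose summands have range $2R^2$ and variance at most $2R^2\Var(Z)$, and invert it to obtain that, with probability at least $1-\delta$,
\[
\bigAbs{\hvar-\Var(Z)}\le\sqrt{\frac{2\cdot 2R^2\Var(Z)\log(2/\delta)}{\floors{n/2}}}+\frac{2\cdot 2R^2\log(2/\delta)}{3\floors{n/2}}.
\]

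Finally I would convert $\floors{n/2}$ to $n$ using $\floors{n/2}\ge n/3$ (valid for $n\ge 2$) and fold the resulting numerical factors, together with the universal constant coming from the specific Bernstein/U-statistic inequality invoked, into the prefactor $(C+3)$ and the adjusted confidence term $\log(6C/\delta)$, yielding precisely the stated form.

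The main obstacle is the dependence structure of the U-statistic: the pairs $(Z_i,Z_j)$ are not independent, so Bernstein cannot be applied to the raw double sum. The averaging-plus-Jensen reduction is what circumvents this, at the price of shrinking the effective sample size from $n$ to $\floors{n/2}$ (the origin of the extra constant factor). A secondary subtlety is that the variance appearing on the right-hand side must be $\Var(Z)$ itself rather than the coarser $\EE[h^2]$; this is exactly what forces the fourth-moment step $\EE[(Z-Z')^4]\le 4R^2\,\EE[(Z-Z')^2]$, which converts a fourth-moment quantity into $R^2\Var(Z)$ by exploiting boundedness, so that the bound adapts to small $\Var(Z)$.
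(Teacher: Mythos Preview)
Your argument is correct and takes a genuinely different route from the paper's proof. You treat $\hvar$ directly as a degree-two U-statistic with kernel $h(z,z')=\tfrac12(z-z')^2$, invoke Hoeffding's averaging representation to write it as a convex combination of i.i.d.\ averages over $\floors{n/2}$ disjoint pairs, and then transfer Bernstein's inequality through Jensen on the moment generating function. The paper instead expands $\hvar=\tfrac1n\sum_i Z_i^2-\tfrac{1}{n(n-1)}\sum_{i\ne j}Z_iZ_j$ algebraically, handles the diagonal part $\tfrac1n\sum_i Z_i^2-\EE[Z^2]$ by a direct Bernstein bound (using $\Var(Z^2)\lesssim R^2\Var(Z)$), and controls the off-diagonal part by centering, applying a decoupling inequality of de~la~Pe\~na and Gin\'e to replace $(Z_i-\mu)(Z_j-\mu)$ by $(Z_i-\mu)(Z_j'-\mu)$ with an independent copy, and then using a conditional Bernstein argument; the peculiar form $(C+3)$ and $\log(6C/\delta)$ in the statement reflects the decoupling constant $C$ together with three separate Bernstein applications. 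Your approach is more elementary and self-contained, avoiding the decoupling machinery entirely at the cost of halving the effective sample size (which is harmless here since only an existential constant is claimed). The paper's route keeps the full $n$ in the denominator but imports a nontrivial external result and requires more bookkeeping across several terms.
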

\begin{proof}
For this lemma only, define $\mu = \EE[Z]$. By direct calculation
\begin{align*}
\Var(Z) = \EE\sbr{Z^2} - \mu^2, \qquad \hvar = \frac{1}{n}\sum_{i=1}^n Z_i^2 - \frac{1}{n(n-1)}\sum_{i \ne j} Z_iZ_j.
\end{align*}
We work with the second term first. Let $Z_1',\ldots,Z_n'$ be an iid
sample, independent of $Z_1,\ldots,Z_n$. Now, by Theorem 3.4.1 of~\citet{de2012decoupling}, we have
\begin{align*}
\PP\sbr{\abr{\frac{1}{n(n-1)} \sum_{i \ne j} (Z_i - \mu)(Z_j-\mu)} > t} \leq C \PP\sbr{\abr{\frac{1}{n(n-1)} \sum_{i \ne j} (Z_i - \mu)(Z_j'-\mu)} > t/C}
\end{align*}
for a universal constant $C>0$. Thus, we have decoupled the
U-statistic. Now let us condition on $Z_1,\ldots,Z_n$ and write $X_j =
\frac{1}{n-1}\sum_{i \ne j}(Z_i - \mu)$, which conditional on
$Z_1,\ldots,Z_n$ is non-random. We will apply Bernstein's inequality
on $\frac{1}{n}\sum_{j=1}^nX_j(Z'_j-\mu)$, which is a centered random
variable, conditional on $Z_{1:n}$. This gives that with probability
at least $1-\delta$
\begin{align*}
\abr{\frac{1}{n}\sum_{j=1}^{n}X_j (Z'_j-\mu)} &\leq \sqrt{\frac{2\frac{1}{n}\sum_{j=1}^n\Var(X_jZ'_j)\log(2/\delta)}{n}} + \frac{2 \max_j \sup \abr{X_j(Z_j-\mu)} \log(2/\delta)}{3n}.\\
& \leq \max_j|X_j| \rbr{\sqrt{\frac{2\Var(Z)\log(2/\delta)}{n}} + \frac{2 R \log(2/\delta)}{3n}}.
\end{align*}
This bound holds with high probability for any $\{X_j\}_{j=1}^n$. In
particular, since $|X_j| \leq R$ almost surely, we get that with
probability $1-\delta$
\begin{align*}
\abr{\frac{1}{n(n-1)} \sum_{i \ne j} (Z_i - \mu)(Z_j - \mu)} \leq C \sqrt{\frac{2R^2\Var(Z)\log(2C/\delta)}{n}} + \frac{2CR^2\log(2C/\delta)}{3n}.
\end{align*}
The factors of $C$ arise from working through the decoupling
inequality.

Next, by a standard application of Bernstein's inequality, with probability at least $1-\delta$, we have
\begin{align*}
\abr{\frac{1}{n}\sum_{i=1}^n Z_i - \mu} \leq \sqrt{\frac{2 \Var(Z)\log(2/\delta)}{n}} + \frac{2R\log(2/\delta)}{3n}.
\end{align*}
Therefore, with probability $1-2\delta$ we have
\begin{align*}
\abr{\frac{1}{n(n-1)}\sum_{i \ne j} Z_i Z_j - \mu^2} &\leq \abr{\frac{1}{n(n-1)}\sum_{i\ne j} (Z_i - \mu)(Z_j-\mu)} + 2\abr{\frac{1}{n}\sum_{i=1}^n Z_i\mu - \mu^2}\\
& \leq \abr{\frac{1}{n(n-1)}\sum_{i\ne j} (Z_i - \mu)(Z_j-\mu)} + 2R \abr{\frac{1}{n}\sum_{i=1}^n Z_i - \mu}\\
& \leq (C+2)\sqrt{\frac{R^2\Var(Z)\log(2C/\delta)}{n}} + \frac{2(C+2)R^2\log(2C/\delta)}{3n}
\end{align*}


Let us now address the first term, a simple application of Bernstein's
inequality gives that with probability at least $1-\delta$
\begin{align*}
\abr{\frac{1}{n}\sum_{i=1}^{n}Z_i^2 - \EE[Z^2]} \leq \sqrt{\frac{2 \Var(Z^2)\log(2/\delta)}{n}} + \frac{2 R^2\log(2/\delta)}{3n}\\
\leq \sqrt{\frac{2 R^2 \Var(Z)\log(2/\delta)}{n}} + \frac{2 R^2\log(2/\delta)}{3n}.
\end{align*}
Combining the two inequalities, we obtain the result.
\end{proof}

Since we are estimating the variance of the sample average estimator,
we divide by another factor of $n$. Meanwhile the range and the
variance terms themselves are certainly $O(1)$, so the error terms in
Lemma~\ref{lem:variance_estimation} are $O(n^{-3/2})$ and $O(n^{-2})$
respectively. Formally, there exists a universal constants $C_1,C_2>0$ such
that for any $\delta \in (0,1)$ with probability at least $1-\delta$
we have
\begin{align*}
\abr{ \Var(\theta) - \hvar(\theta)} \leq C_1\sqrt{\frac{\log(1/\delta)}{n^3}} + C_2\frac{\log(1/\delta)}{n^2}.
\end{align*}
By adjusting the constant, we can simplify the expression by removing
the $n^{-2}$ term. In other words, there exists a different universal
constant $C > 0$ such that
\begin{align*}
\abr{ \Var(\theta) - \hvar(\theta)} \leq \frac{C\log(1/\delta)}{n^{3/2}}
\end{align*}
holds with probability at least $1-\delta$.


For the model selection result, first apply
Lemma~\ref{lem:variance_estimation} for all $\theta \in \Theta$,
taking a union bound. Further take a union bound over the event that
$\bias(\theta) \leq \hbias(\theta)$ for all $\theta \in \Theta$, if it
is needed. Then, observe that for any $\theta_0 \in \Theta_0$ we have
\begin{align*}
\mse(\hat{\theta}) & = \bias(\hat{\theta})^2 + \Var(\hat{\theta})
\leq \hbias(\hat{\theta})^2 + \hvar(\hat{\theta}) + \frac{C\log(|\Theta|/\delta)}{n^{3/2}}\\
& \leq \hbias(\theta_0)^2 + \hvar(\theta_0) + \frac{C\log(|\Theta|/\delta)}{n^{3/2}}
 \leq 0 + \Var(\theta_0) + \frac{2C\log(|\Theta|/\delta)}{n^{3/2}} = \mse(\theta_0) + \frac{2C\log(|\Theta|/\delta)}{n^{3/2}}.
\end{align*}
The first inequality uses Lemma~\ref{lem:variance_estimation} and the
fact that $\bias \leq \hbias$. The second uses that $\hat{\theta}$
optimizes this quantity, and the third uses the property that
$\hbias(\theta_0) = 0$ by assumption. Note that the universal constant
here is slightly different from the one in the variance bound, since
we have also taken a union bound for the bias term.

\subsection{Construction of Upper Bounds on Bias}

In this section we give detailed construction of bias upper bounds
that we use in the model selection procedure. Recall that this is for
the analysis only. Empirically we found that using the estimators
alone --- not the upper bounds --- leads to better performance.

Throughout, we fix a set of hyperparameters $\theta$, which we
suppress from the notation.

\paragraph{Direct bias estimation.}
The most straightforward bias estimator is to simply approximate the
expectation with a sample average.
\begin{align*}
\tbias = \Abs{\frac{1}{n}\sum_{i=1}^{n} \rbr{\hat{w}(x_i,a_i) - w(x_i,a_i)}\rbr{r_i - \heta(x_i,a_i)}}.
\end{align*}
This estimator has finite-sum structure, and naively, each term is
bounded in $[-w_\infty,w_\infty]$ where $w_\infty = \max_{x,a} w(x,a)$. The variance is at most
$\EE_{\mu} [w(x,a)^2]$. Hence Bernstein's inequality gives that with probability at least $1-\delta$
\begin{align*}
\abr{\tbias - \bias} \leq \sqrt{\frac{2 \EE_\mu [w(x,a)^2] \log(2/\delta)}{n}} + \frac{2w_{\infty} \log(2/\delta)}{3n}.
\end{align*}
Inflating the estimate by the right hand side gives $\hbias$, which is
a high probability upper bound on $\bias$.

\paragraph{Pessimistic estimation.}
The bias bound used in the pessimistic estimator and its natural
sample estimator are
\begin{align*}
\EE_\mu\bigBracks{\bigAbs{\hat{w}(x,a) - w(x,a)}}, \qquad
\tbias
= \frac{1}{n}\sum_{i} \sum_a \mu(a\given x_i) \BigAbs{\hat{w}(x_i,a) - w(x_i,a)}
= \frac{1}{n}\sum_{i} \sum_a \pi(a\given x_i) \Abs{\frac{\hat{w}(x_i,a)}{w(x_i,a)}-1}.
\end{align*}
Note that since we have already eliminated the dependence on the
reward, we can analytically evaluate the expectation over actions,
which will lead to lower variance in the estimate.

Again we perform a fairly naive analysis. Since $0\le\hat{w}(x,a) \leq
w(x,a)$, the random variables, equal to the inner sum over~$a$, take values in $[0,1]$. Therefore, Hoeffding's inequality gives that with probability $1-\delta$
\begin{align*}
\bias \leq \tbias + \sqrt{\frac{\log(1/\delta)}{2n}},
\end{align*}
and we use the right hand side for our high probability upper bound.

\paragraph{Optimistic estimation.}
For the optimistic bound, we must estimate two terms, one involving
the regressor and one involving the importance weights. We use
\begin{align*}
T_1 \defeq \frac{1}{n}\sum_{i=1}^{n} z(x_i,a_i)(r_i - \heta(x_i,a_i))^2, \qquad
T_2 \defeq
    \frac{1}{n}\sum_{i=1}^{n} \sum_a
           \mu(a\given x_i)\frac{\abs{\hat{w}(x_i,a) - w(x_i,a)}^2}{z(x_i,a)}.
\end{align*}
Note here that the former uses sampled actions from $\mu$, but does
not involve the importance weight, while the latter involves the
importance weight but analytically evaluates the expectation over
$\mu$. Thus we can expect that both are fairly low variance.

For both, we use Bernstein's inequality. For $T_1$, each term is bounded in $[-z_{\infty},z_\infty]$ where $z_{\infty} = \max_{x,a} z(x,a)$ and its variance is
bounded by
$\EE_\mu[z(x,a)^2]$. Thus we get that with probability at least $1-\delta/2$
\begin{align*}
\EE\sbr{z(x,a)(r - \heta(x,a))^2} \leq T_1 + \sqrt{\frac{2 \EE_{\mu}[z(x,a)^2] \log(2/\delta)}{n}} + \frac{2z_{\infty}\log(2/\delta)}{3n}.
\end{align*}

For $T_2$, we similarly to the pessimistic case convert the inner expectation w.r.t.\ $\mu(a\given x_i)$ to an expectation w.r.t.\ $\pi(a\given x_i)$, obtaining a random variable bounded between 0 and $\max_{x,a}w(x,a)/z(x,a)$. Using Hoeffding's inequality, we obtain that with probability $1-2\delta$
\begin{align*}
\EE_\mu\BigBracks{\abs{\hat{w}(x,a) - w(x,a)}^2/z(x,a)} \leq T_2 + \sqrt{\frac{\max_{x,a} \frac{w(x,a)}{z(x,a)} \log(2/\delta)}{2n}}.
\end{align*}
The high probability upper bound follows by multiplying the two right
hand sides together and taking square root.

\section{Experimental Details and Additional Results}
\label{app:experiments}

\subsection{Experimental Details and Results for Atomic Actions}
\paragraph{Dataset statistics.} We use datasets from the \href{https://archive.ics.uci.edu/ml/index.php}{UCI Machine Learning Repository \cite{Dua:2019}}. Dataset statistics are displayed in~\pref{tab:datasets}.

\paragraph{Hyperparameter grid.}
For our shrinkage estimators and \switch, we choose the shrinkage
coefficients from a grid of 30 geometrically spaced values. For the
pessimistic estimator and \switch, the largest and smallest values in
the grid are the $0.05$ quantile and $0.95$ quantile of the importance weights. For the optimistic estimator, the largest and smallest values
are $0.01\times(w_{0.05})^2$ and $100\times(w_{0.95})^2$ where $w_{0.05}$ and
$w_{0.95}$ are the 0.05 and 0.95 quantile of the importance weights.

For the off-policy learning experiments, we only consider the
shrinkage coefficients in $\{0.0,0.1,1,10,100,1000,\infty\}$ during
training, while for model selection, we use the same grid as in the
evaluation experiments.

\paragraph{MRDR.}
\citet{farajtabar2018more} propose training the
regression model with a specific choice of weighting $z$, which we also use in our
experiments. When the evaluation policy $\pi$ is deterministic, they
set $z(x,a) = \one\{\pi(x) = a\}\cdot\frac{1 -\mu(a\given x)}{\mu(a
  \given x)^2}$. For stochastic policies, following the implementation
of Farajtabar et al., we sample $a_i \sim \pi(\cdot \given x_i)$ for
each example in the dataset used to train the reward predictor. Then
we proceed as if the evaluation policy deterministically chooses $a_i$
on example $x_i$.

\paragraph{Ablation study for deterministic target policy.}
Since MRDR is more suited to deterministic policies, we also report
the results of our regressor and shrinkage ablations for a
deterministic target policy $\pi_{1,\textrm{det}}$ in~\pref{tab:ablations_det}. As with the stochastic policies, the
estimator influences the choice of reward predictor, but note that
$z=1$ and $z=w$ are more favorable here. This is likely due to high variance
suffered from training with $z=w^2$, because the importance weights
are larger with a deterministic policy. Our shrinkage
ablation reveals that both estimator types are important also when
the target policy is deterministic.



\paragraph{Ablation study for model selection.}
In~\pref{tab:bias_ab}, we show the comparison of different model
selection methods under different reward predictors and different
shrinkage types. In most cases, Dir-all (\drsdirect where the bias
bound is estimated as the pointwise minimum of (1) the bias, (2) the
optimistic bound and (3) the pessimistic bound) and Up-all (all bias
estimates are adjusted by adding twice standard error before taking pointwise
minimum) are most frequently statistically indistinguishable from the
best, which suggests that our proposed bias estimate (by taking
pointwise minimum of the three) is robust and adaptive.

\paragraph{Comparisons across additional experimental conditions.}
In~\pref{fig:cdfs_scenario} and~\pref{fig:cds_small}, we compare our
new estimators, \drsdirect and \drsup, with baselines across
various conditions (apart from deterministic versus stochastic rewards from the main paper). We first investigate the performance under \emph{friendly logging} (logging and
evaluation policies are derived from the same deterministic policy,
$\pi_{1,det}$), \emph{adversarial logging} (logging and evaluation policies are
derived from different policies $\pi_{1,det}$, $\pi_{2, det}$), and
\emph{uniform logging} (logging policy is uniform over all actions). Then
we plot the performance in the small sample regime, where we aggregate
the 108 conditions (6 logging policies, 9 datasets,
deterministic/stochastic reward) at just 200 bandit samples.

\paragraph{Comparisons across all reward predictors.}

In~\pref{tab:ecdf_sig_mix}--\pref{tab:ecdf_sig_mrdr},
we compare the performance of \drsdirect and \drsup against baselines across various
choices of reward predictors. We begin with using the best reward
predictor type for each method (matching the setting of the main paper),
and then consider each reward predictor in turn, across all estimators.
We report the number of conditions where each estimator is
statistically indistinguishable from the best, and the number of
conditions where each estimator statistically dominates all others.
\drsup is most often in the top group and most often the unique
winner. \drsdirect is also better than \snips, \sndr, and
\switch. These results suggest that our shrinkage estimators are
robust to different choices of reward predictors, and not just limited
to the recommended set $\{\hat{\eta}\equiv0, z=w^2\}$.


\paragraph{Robustness of \drsdirect and \drsup (w.r.t.\ inclusion of more reward predictors)}
In~\pref{fig:robustness}, we test the robustness of our proposed
methods as we incorporate more reward predictors. Our practical
suggestions is to use $\{\hat{\eta} \equiv 0, z = w^2\}$ (shown as
\drsdirect and \drsup in the figure). Here we also evaluate these
methods when selecting from all reward predictors in the set
$\{\hat{\eta}\equiv 0, z\equiv 1, z=w, z=w^2, \text{MRDR}\}$
(shown as \drsdirect(all) and \drsup(all) in the figure). For \drsdirect,
the curves almost match, suggesting that it is quite robust. However,
\drsup is less robust to including additional reward predictors.

\paragraph{Learning curves.} At the end of appendix, we provide learning curves across all conditions. Dataset \emph{glass} is excluded since we only ran it for a single sample size $n=214$.


\begin{table}
\begin{center}
{\small
\begin{tabular}{|c|c|c|c|c|c|c|c|c|c|}
\hline
Dataset & Glass & Ecoli & Vehicle & Yeast & PageBlok & OptDigits & SatImage & PenDigits & Letter\\
\hline\hline
Actions & 6 & 8 & 4 & 10 & 5 & 10 & 6 & 10 & 26\\
\hline
Examples & 214 & 336 & 846 & 1484 & 5473 & 5620 & 6435 & 10992 & 20000\\
\hline
\end{tabular}}
\vspace{-0.25cm}
\caption{Dataset statistics.}
\label{tab:datasets}
\vspace{-0.1cm}
\end{center}
\end{table}

\begin{table}
\begin{minipage}{0.6\textwidth}
\begin{center}
{\small


\begin{tabular}{| l | c | c | c | c | c |}
\hline
& $\hat{\eta} \equiv 0$ & $z \equiv 1$ & $z = w$ & $z = w^2$ & MRDR\\
\hline\hline
DM & 0 (0) & 54 (30) & 59 (23) & 35 (4) & 24 (6)\\
DR & 28 (1) & 94 (11) & 85 (0) & 85 (1) & 85 (0)\\
snDR & 65 (7) & 86 (7) & 79 (0) & 72 (0) & 71 (0)\\
DRs & 14 (9) & 51 (17) & 65 (14) & 54 (6) & 47 (4)\\
\hline
\end{tabular}
}\end{center}
\end{minipage}
\begin{minipage}{0.4\textwidth}
\begin{center}
{\small
\begin{tabular}{| l | c | c |}
\hline
& \lone & \ltwo \\
\hline\hline
$\hat{\eta} \equiv 0$ & 13 & 59\\
$z \equiv 1$ & 29 & 55\\
$z = w$ & 26 & 66\\
$z = w^2$ & 30 & 67\\
\textsc{MRDR} & 29 & 63\\
\hline
\end{tabular}
}\end{center}
\end{minipage}
\vspace{-0.35cm}
\caption{Ablation analysis for
  \emph{deterministic} target policy $\pi_{1,\textrm{det}}$ across
  experimental conditions. Left: we compare reward predictors using a
  fixed estimator (with oracle tuning if applicable). We report the
  number of conditions where a regressor is statistically
  indistinguishable from the best and, in parenthesis, the number of
  conditions where it statistically dominates all others. Right: we
  compare different shrinkage types using a fixed reward predictor
  (with oracle tuning) reporting the number of conditions where one
  statistically dominates the other.\looseness=-1}
\label{tab:ablations_det}
\end{table}

\subsection{Experimental Details for Combinatorial Actions}
\label{app:exp_slates}

\paragraph{Hyperparameter grid.} We select the hyperparameter $\lambda$ from the grid of 15 geometrically spaced values, with the smallest value $0.01\times (w_{0.05})^2$ and the largest value $100\times (w_{0.95})^2$, where $w_{0.05}$ and $w_{0.95}$ are the 0.05 and 0.95 quantiles of the weights $w(x_i,\ab_i)$ on the logged data. We also add two boundary values $\lambda = 10^{-50}$ and $\lambda=10^{30}$ to include DM and DR-PI as special cases.

\paragraph{Basis construction.}
We use the logging distribution supported on a linearly independent set of actions, i.e., a basis, constructed following Algorithm~\ref{alg:example}. The number of elements of the basis for the action space of lists of length $\ell$ out of $m$ items is $1+\ell(m-1)$.

\begin{algorithm}
   \caption{Constructing basis for the action space of lists of length $\ell$ out of $m$ items.}
   \label{alg:example}
\small
\begin{algorithmic}
   \STATE \textit{Actions $\ab$ are represented as tuples of size $\ell$
                  with entries $a[i]\in\set{0,\dots,m-1}$, indexed by $i\in\set{0,\dotsc,\ell-1$}.}
   \smallskip
   \STATE \textit{Actions $\ab$ correspond to vectors in $\RR^{\ell m}$, obtained by
                  representing each $a[i]$
                  as a vector of standard basis in $\RR^m$,}
   \STATE \textit{and concatenating these vectors.}
   \smallskip
   \STATE {\bfseries Assume:} Greedy action is the tuple $\gb=[1,2,\dots,\ell]$
   \smallskip
   \STATE Initialize $\Bcal=\set{\gb}$
   \FOR{$i=1$ {\bfseries to} $\ell-1$}
   \STATE Set $\ab=\gb$
   \STATE Set $a[0]=i$ and $a[i]=0$, $\Bcal=\Bcal\cup \{\ab\}$
   \ENDFOR
   \FOR{$j=\ell$ {\bfseries to} $m-1$}
   \STATE Set $\ab=\gb$
   \STATE Set $a[0]=j$ and $\Bcal=\Bcal\cup \{\ab\}$
   \ENDFOR
   \FOR{$i=1$ {\bfseries to} $\ell-1$}
   \FOR{$i'=1$ {\bfseries to} $\ell-1$ such that $i\neq i'$}
   \STATE Set $\ab=\gb$
   \STATE Set $a[0]=i$, $a[i]=i'$ and $a[i']=0$, $\Bcal=\Bcal\cup \{\ab\}$
   \ENDFOR
   \FOR{$j=\ell$ {\bfseries to} $m-1$}
   \STATE Set $\ab=\gb$
   \STATE Set $a[0]=i$, $a[i]=j$ and $\Bcal=\Bcal\cup \{\ab\}$
   \ENDFOR
   \ENDFOR
   \FOR{$i=1$ {\bfseries to} $\ell-1$}
   \STATE Set $\ab=\gb$
   \STATE Set $a[0]=\ell$, $a[i]=0$ and $\Bcal=\Bcal\cup \{\ab\}$
   \ENDFOR
   \STATE Return $\Bcal$
\end{algorithmic}
\end{algorithm}


\begin{table}
\begin{center}
\begin{tabular}{| l | c | c | c | c | c | c | c | c |}
\hline
& Dir-all & Dir-naive & Dir-opt & Dir-pes & Up-all & Up-naive & Up-opt & Up-pes \\
\hline\hline
0-pes & 71 & 67 & 74 & 78 & 63 & 60 & 79 & 79\\
0-opt & 75 & 68 & 71 & 81 & 64 & 62 & 66 & 81\\
0-best & 63 & 59 & 67 & 74 & 56 & 54 & 64 & 76\\
$w^2$-pes & 47 & 41 & 5 & 3 & 72 & 71 & 5 & 3\\
$w^2$-opt & 47 & 40 & 3 & 2 & 73 & 70 & 3 & 2\\
$w^2$-best & 47 & 42 & 4 & 3 & 75 & 72 & 4 & 3\\
best-pes & 51 & 46 & 7 & 5 & 69 & 70 & 6 & 5\\
best-opt & 49 & 46 & 7 & 3 & 69 & 70 & 5 & 3\\
best-best & 50 & 46 & 8 & 4 & 71 & 70 & 6 & 4\\
all-best & 49 & 46 & 7 & 6 & 65 & 67 & 6 & 6\\
\hline
\end{tabular}
\caption{Comparison of model selection methods when paired with
  different reward predictor sets and shrinkage types. As in other
  tables, we record the number of conditions in which this model
  selection method is statistically indistinguishable from the best,
  for fixed reward predictor set and shrinkage types. Columns are
  indexed by model selection methods, ``Dir'' denotes taking sample
  average and ``Up'' denotes inflating sample averages with twice the
  standard error. ``Naive'' denotes directly estimating bias, ``opt''
  denotes estimating optimistic bias bound, ``pes'' denotes
  pessimistic bias bound, and ``all'' denotes taking the pointwise
  minimum of all three. Rows are indexed by reward predictors:
  $\hat{\eta}\equiv 0$, $z = w^2$, ``best'' denotes selecting over
  both, and ``all'' denotes selecting over these and additionally
  $z=1$, $z=w$, and MRDR. Rows are also indexed by shrinkage type,
  optimistic, pessimistic, and best, which denotes model selection
  over both.}
\label{tab:bias_ab}
\end{center}
\end{table}

\begin{figure}
\begin{center}
\includegraphics[width=0.5\textwidth]{./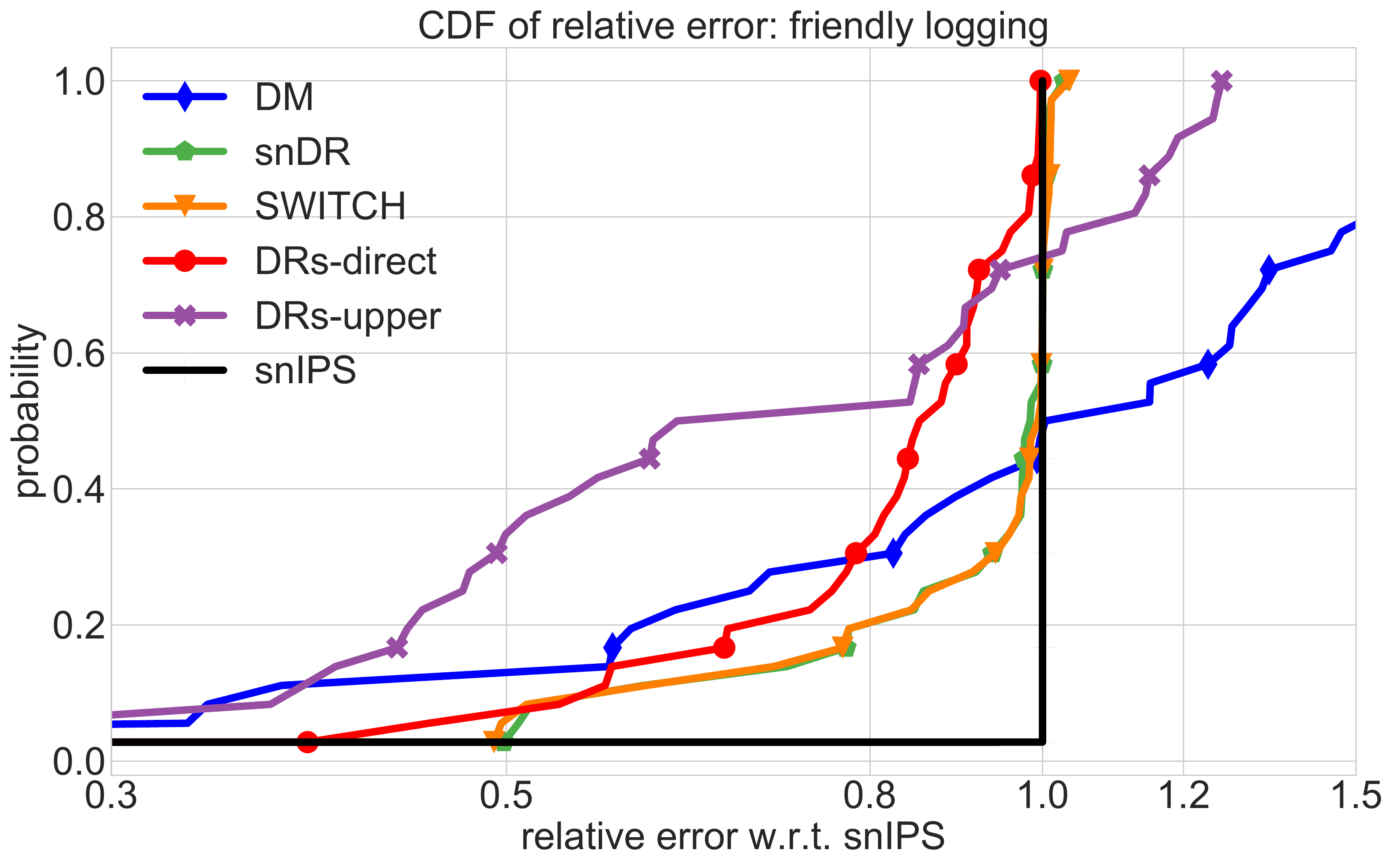}
\hspace{-0.2cm}
\includegraphics[width=0.5\textwidth]{./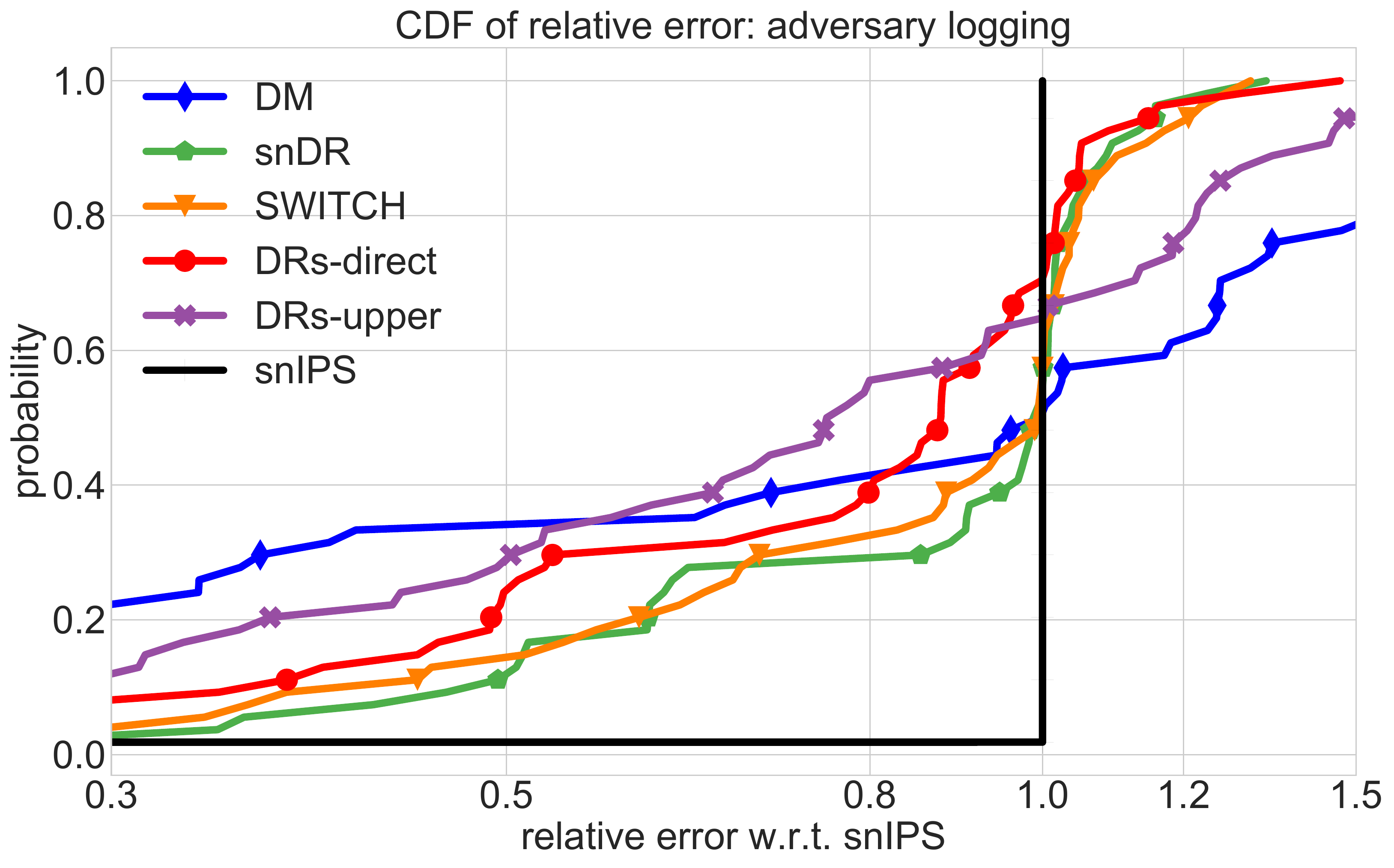}
\end{center}
\vspace{-0.5cm}
\caption{CDF plots of normalized MSE aggregated across all conditions
  with friendly scenario (left) and adversary scenario
  (right). }
\label{fig:cdfs_scenario}
\end{figure}
\begin{figure}
\begin{center}
\includegraphics[width=0.5\textwidth]{./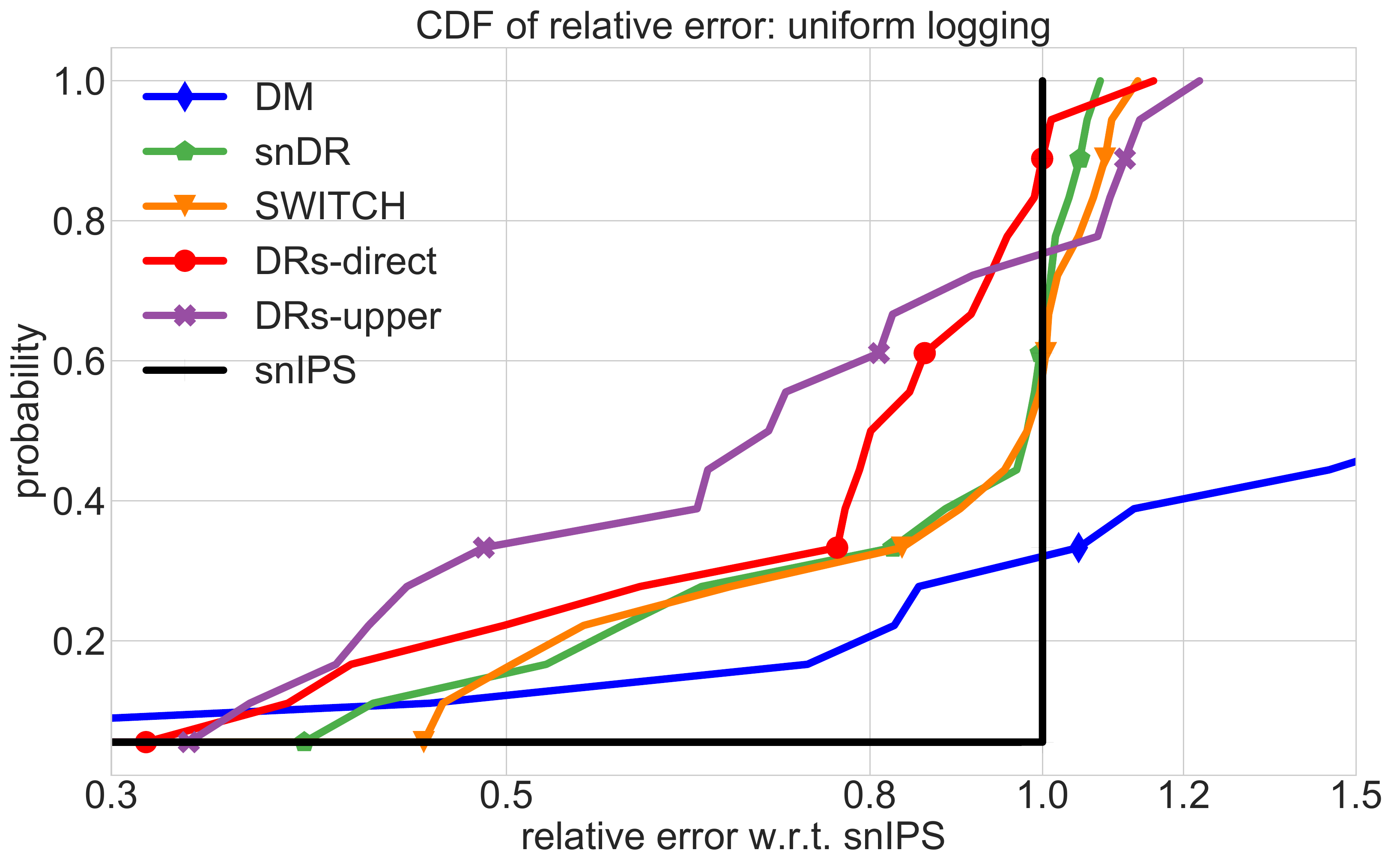}
\hspace{-0.2cm}
\includegraphics[width=0.5\textwidth]{./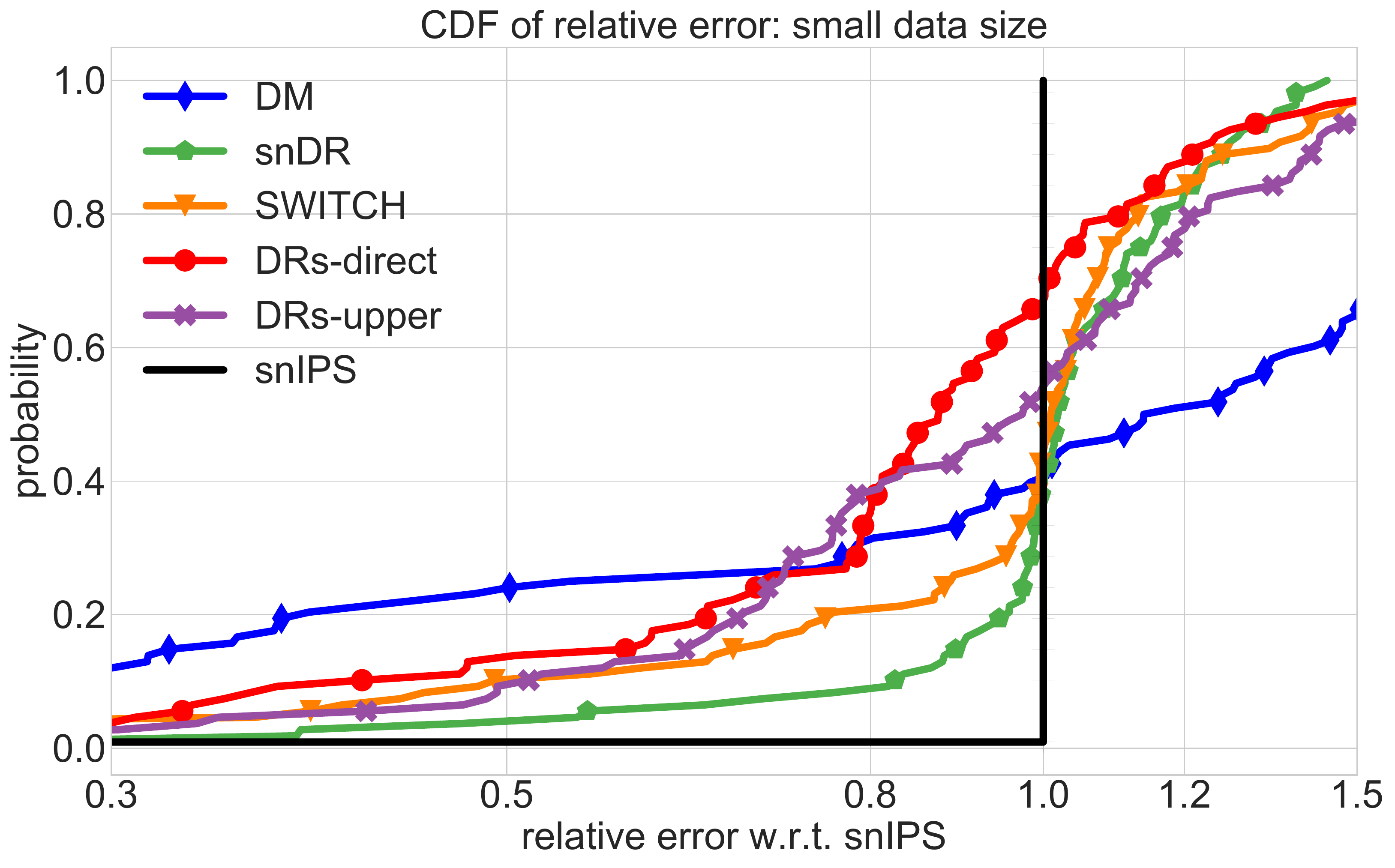}
\end{center}
\vspace{-0.5cm}
\caption{CDF plots of normalized MSE aggregated across all conditions
  with uniform logging policy scenario (left) and small data regime
  (right). }
\label{fig:cds_small}
\end{figure}

\begin{figure}
\begin{center}
\includegraphics[width=0.6\textwidth]{./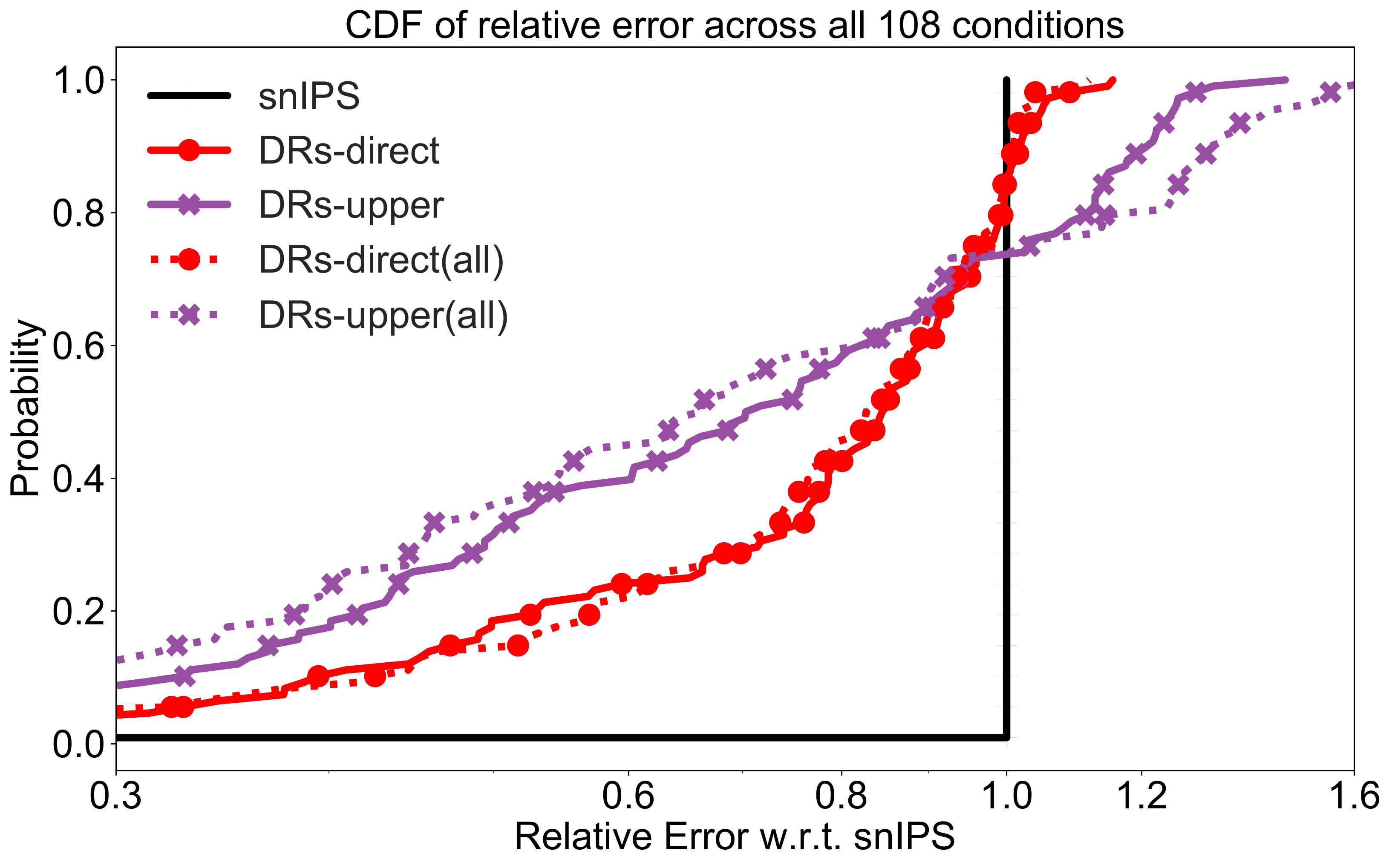}
\end{center}
\vspace{-0.5cm}
\caption{Robustness test for \drsdirect and \drsup. \drsdirect(all) and \drsup(all) means the corresponding method with reward predictor select from all possible cases $\{\hat{\eta} = 0, z=1, z=w, z=w^2\}$ and MRDR.}
\label{fig:robustness}
\end{figure}

\begin{table}
\begin{center}{\small\vspace{0.1cm}
\begin{tabular}{|l | c | c | c | c | c | c |}
\hline
& \snips & DM & \sndr & \switch & \drsdirect & \drsup\\
\hline\hline
Best or Tied & 1 & 39 & 1 & 1 & 27 & 56\\
Unique Best & 0 & 28 & 0 & 0 & 14 & 52\\
\hline
\end{tabular}}
\caption{Significance testing for different estimators across all
  conditions, with each estimator using its best reward predictor (DM uses $z=1$, snDR uses $z=w$, while SWITCH and DRs use reward from $\{\hat{\eta}=0, z=w^2\}$). In the top row, we report the number of conditions where
  each estimator is statistically indistinguishable from the best, and
  in the bottom row we report the number of conditions where each
  estimator is the uniquely best.}
\label{tab:ecdf_sig_mix}
\end{center}
\end{table}

\begin{table}
\begin{center}{\small\vspace{-0.1cm}
\begin{tabular}{|l | c | c | c | c | c | c |}
\hline
& \snips & DM & \sndr & \switch & \drsdirect & \drsup\\
\hline\hline
Best or Tied & 10 & 39 & 19 & 15 & 25 & 49\\
Unique Best & 0 & 23 & 3 & 1 & 5 & 43\\
\hline
\end{tabular}}
\caption{Significance testing for different estimators across all
  conditions (DM, snDR use reward $z=1$, while SWITCH and DRs use reward from $\{\hat{\eta}=0, z=1\}$). In the top row, we report the number of conditions where
  each estimator is statistically indistinguishable from the best, and
  in the bottom row we report the number of conditions where each
  estimator is the uniquely best.}
\label{tab:ecdf_sig_w0}
\end{center}
\end{table}

\begin{table}
\begin{center}{\small\vspace{-0.1cm}
\begin{tabular}{|l | c | c | c | c | c | c |}
\hline
& \snips & DM & \sndr & \switch & \drsdirect & \drsup\\
\hline\hline
Best or Tied & 3 & 44 & 5 & 5 & 30 & 58\\
Unique Best & 0 & 23 & 0 & 0 & 6 & 51\\
\hline
\end{tabular}}
\caption{Significance testing for different estimators across all
  conditions (DM, snDR use reward $z=w$, while SWITCH and DRs use reward from $\{\hat{\eta}=0, z=w\}$). In the top row, we report the number of conditions where
  each estimator is statistically indistinguishable from the best, and
  in the bottom row we report the number of conditions where each
  estimator is the uniquely best.}
\label{tab:ecdf_sig_w1}
\end{center}
\end{table}

\begin{table}
\begin{center}{\small\vspace{-0.1cm}
\begin{tabular}{|l | c | c | c | c | c | c |}
\hline
& \snips & DM & \sndr & \switch & \drsdirect & \drsup\\
\hline\hline
Best or Tied & 5 & 36 & 5 & 4 & 43 & 63\\
Unique Best & 0 & 15 & 0 & 0 & 13 & 46\\
\hline
\end{tabular}}
\caption{Significance testing for different estimators across all
  conditions (DM, snDR use reward $z=w^2$, while SWITCH and DRs use reward from $\{\hat{\eta}=0, z=w^2\}$). In the top row, we report the number of conditions where
  each estimator is statistically indistinguishable from the best, and
  in the bottom row we report the number of conditions where each
  estimator is the uniquely best.}
\label{tab:ecdf_sig}
\end{center}
\end{table}

\begin{table}
\begin{center}{\small\vspace{-0.1cm}
\begin{tabular}{|l | c | c | c | c | c | c |}
\hline
& \snips & DM & \sndr & \switch & \drsdirect & \drsup\\
\hline\hline
Best or Tied & 10 & 17 & 8 & 10 & 37 & 73\\
Unique Best & 0 & 7 & 0 & 0 & 17 & 57\\
\hline
\end{tabular}}
\caption{Significance testing for different estimators across all
  conditions (DM, snDR use reward estimated from MRDR, while SWITCH and DRs use reward from $\{\hat{\eta}=0, \text{MRDR}\}$). In the top row, we report the number of conditions where
  each estimator is statistically indistinguishable from the best, and
  in the bottom row we report the number of conditions where each
  estimator is the uniquely best.}
\label{tab:ecdf_sig_mrdr}
\end{center}
\end{table}

\clearpage
\begin{figure}
\begin{center}
\includegraphics[width=1\textwidth, height=11cm]{./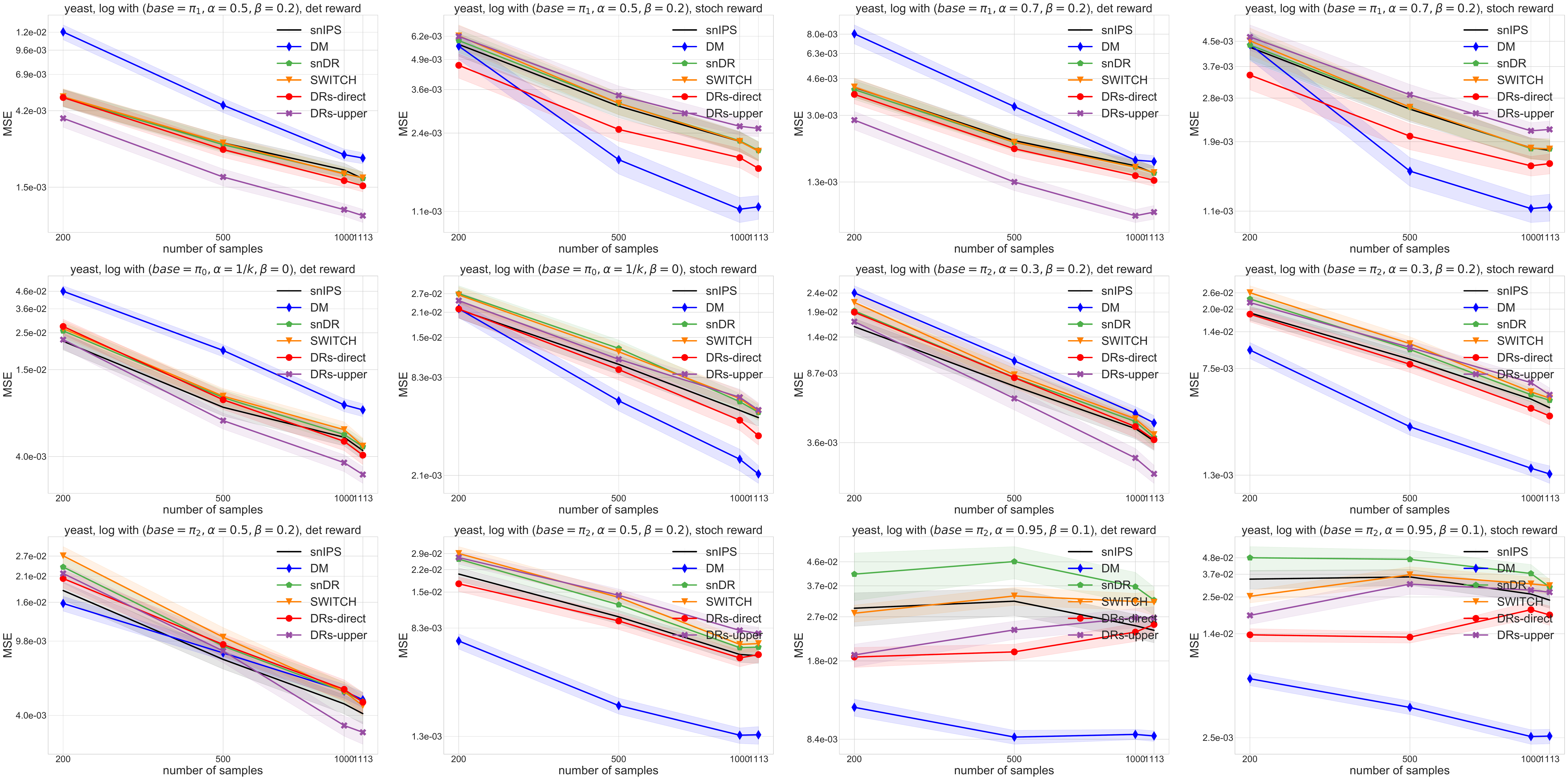}
\hspace{-0.2cm}
\end{center}
\vspace{-0.5cm}
\end{figure}

\begin{figure}
\begin{center}
\includegraphics[width=1\textwidth, height=11cm]{./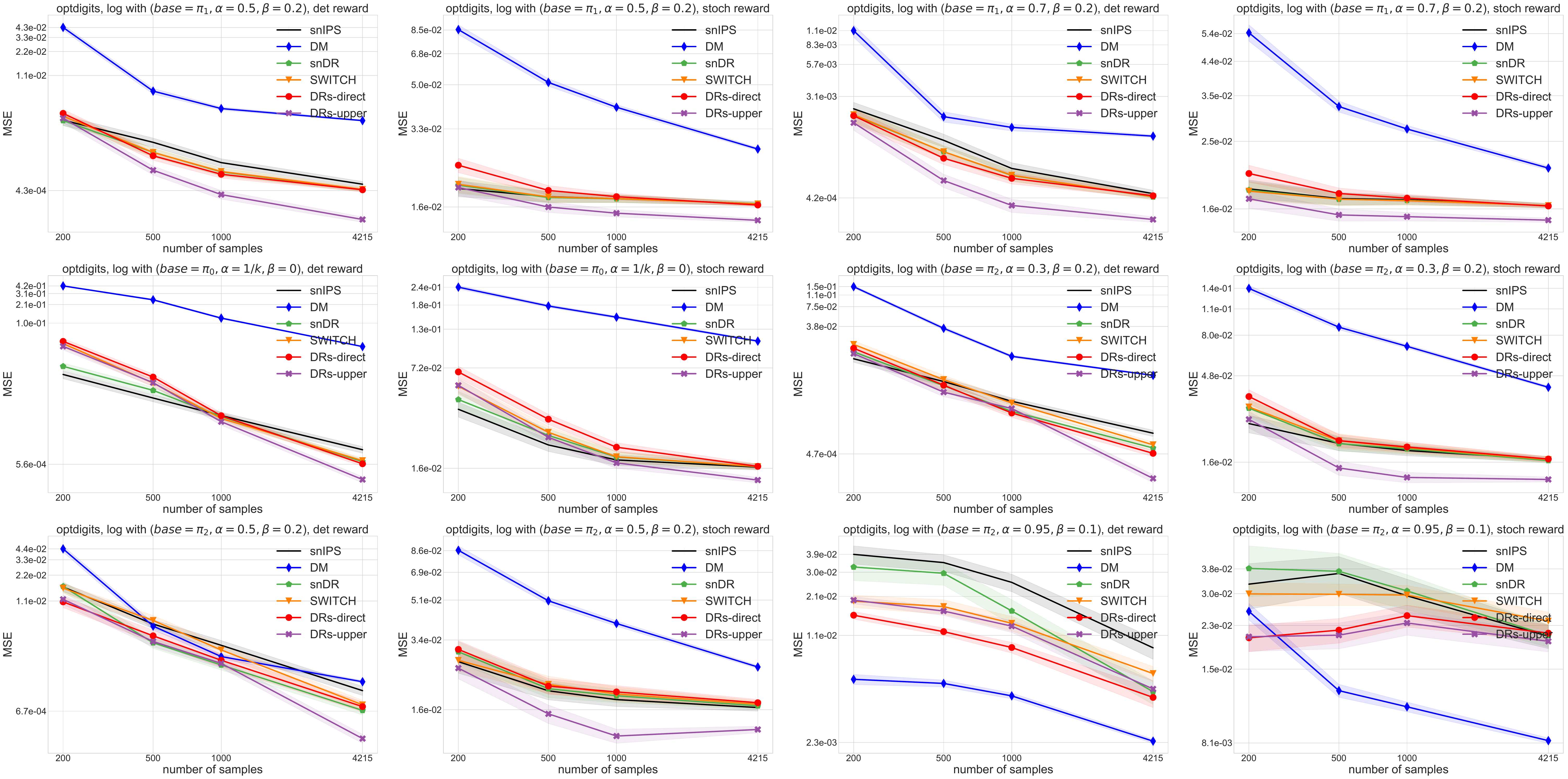}
\hspace{-0.2cm}
\end{center}
\vspace{-0.5cm}
\end{figure}

\begin{figure}
\begin{center}
\includegraphics[width=1\textwidth, height=11cm]{./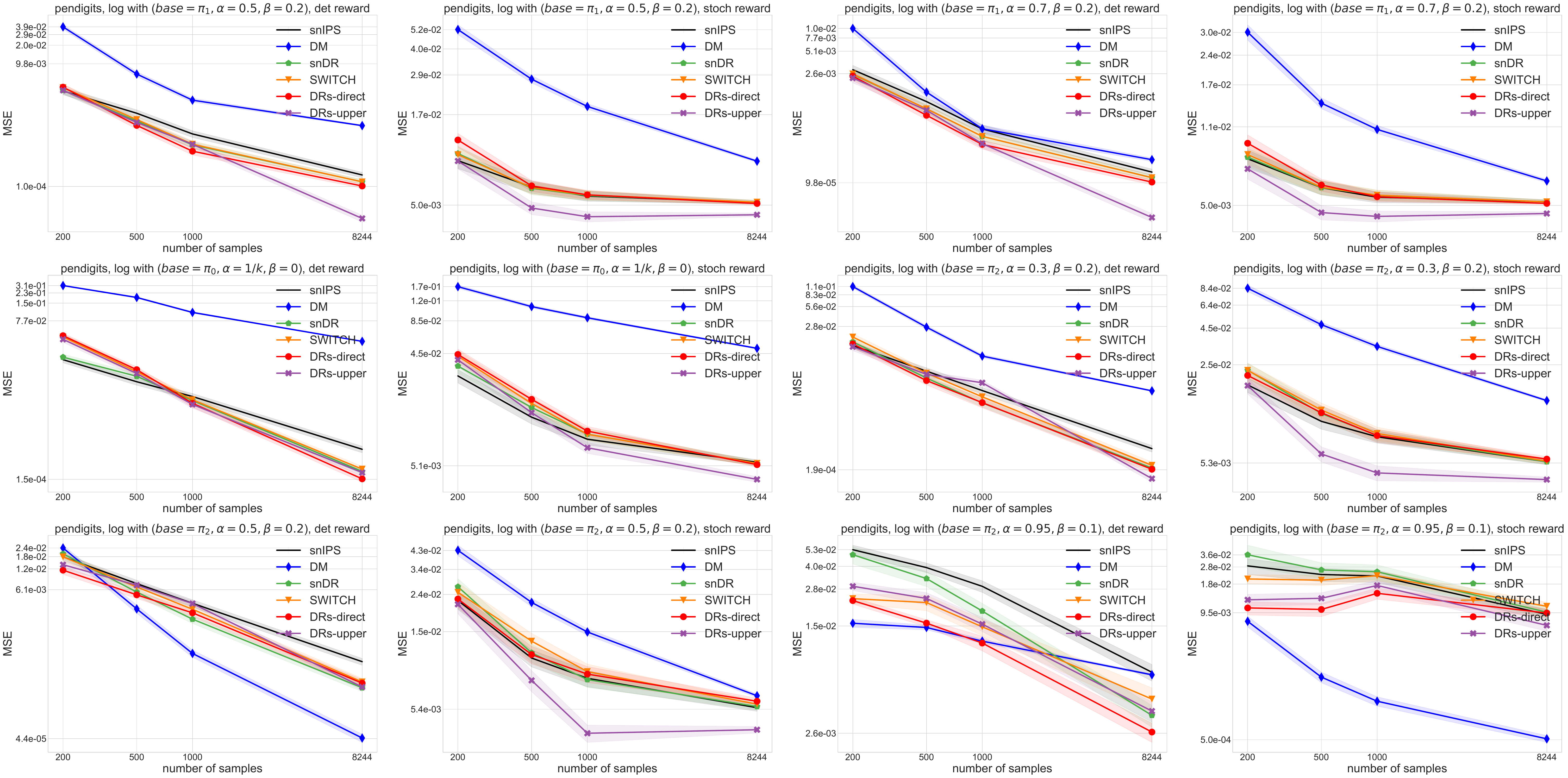}
\hspace{-0.2cm}
\end{center}
\vspace{-0.5cm}
\end{figure}

\begin{figure}
\begin{center}
\includegraphics[width=1\textwidth, height=11cm]{./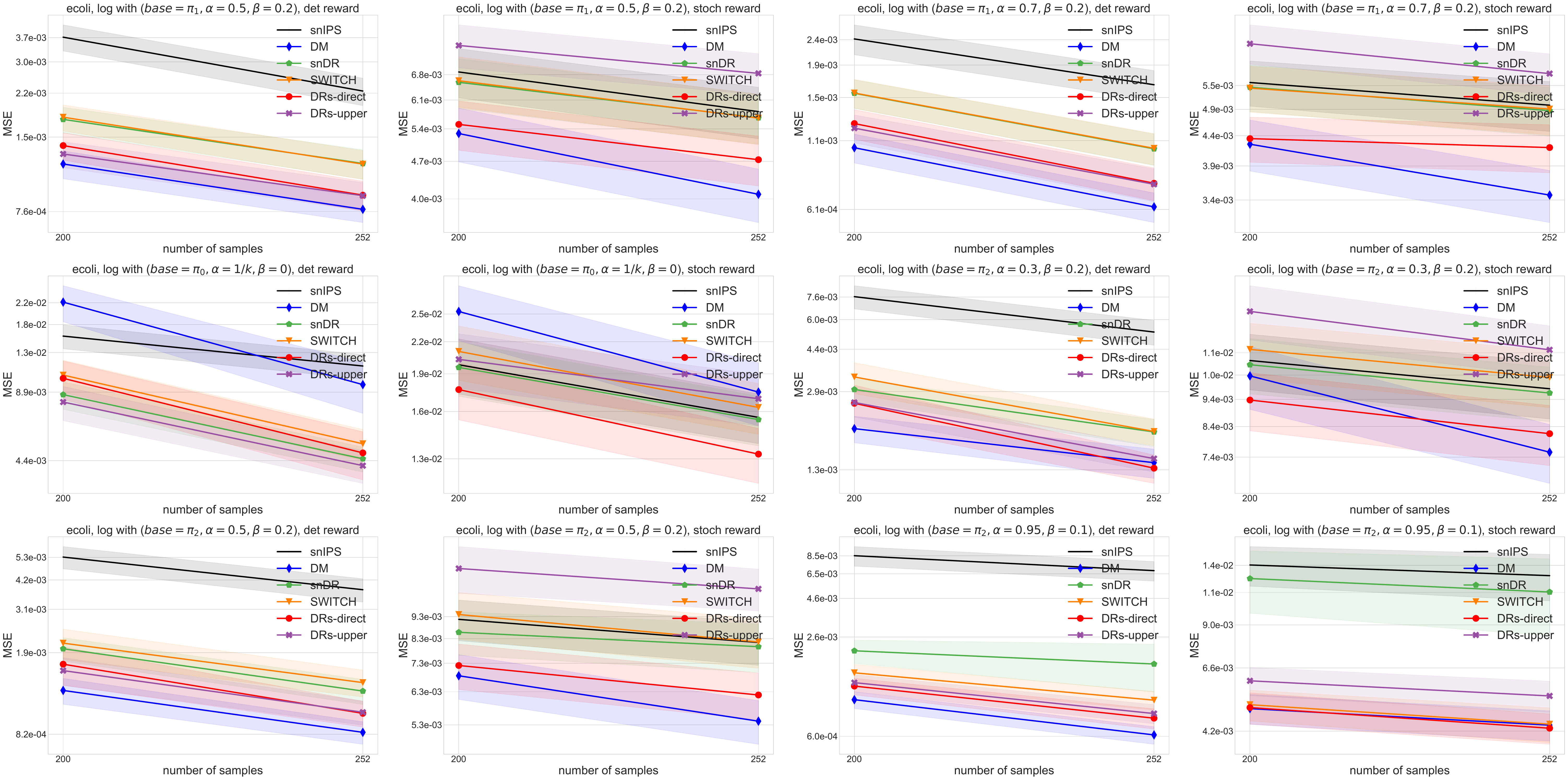}
\hspace{-0.2cm}
\end{center}
\vspace{-0.5cm}
\end{figure}

\begin{figure}
\begin{center}
\includegraphics[width=1\textwidth, height=11cm]{./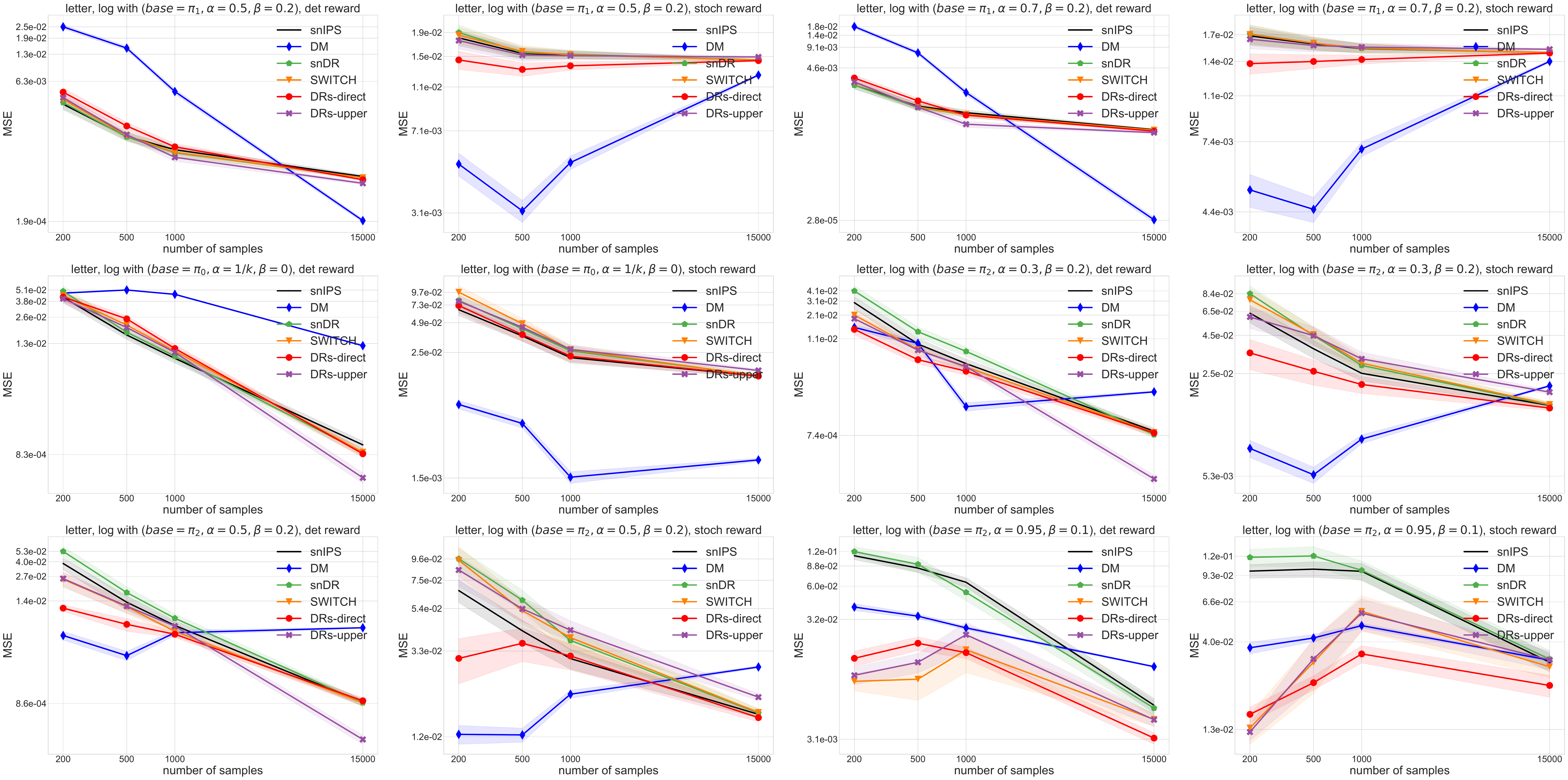}
\hspace{-0.2cm}
\end{center}
\vspace{-0.5cm}
\end{figure}

\begin{figure}
\begin{center}
\includegraphics[width=1\textwidth, height=11cm]{./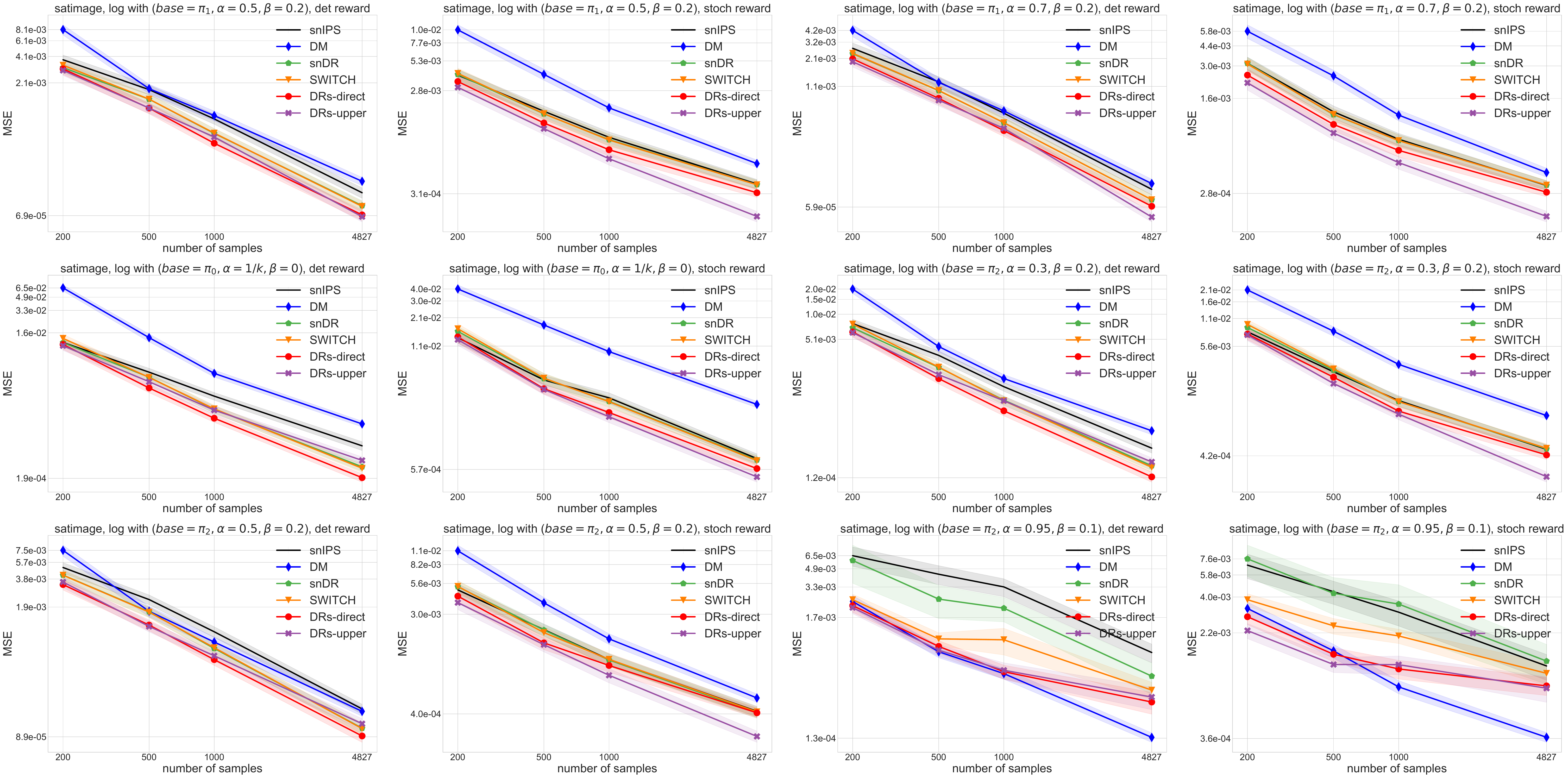}
\hspace{-0.2cm}
\end{center}
\vspace{-0.5cm}
\end{figure}

\begin{figure}
\begin{center}
\includegraphics[width=1\textwidth, height=11cm]{./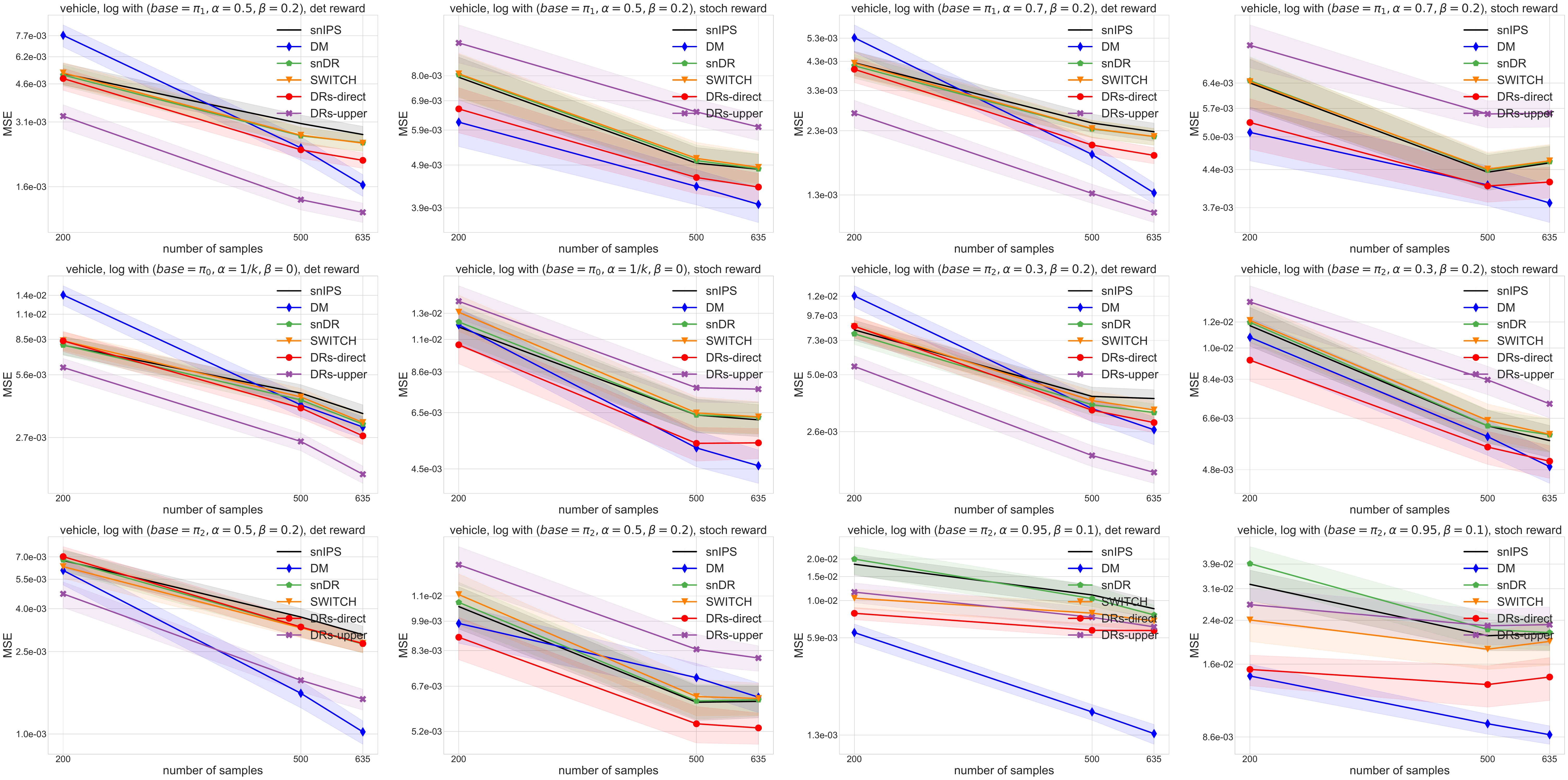}
\hspace{-0.2cm}
\end{center}
\vspace{-0.5cm}
\end{figure}

\begin{figure}
\begin{center}
\includegraphics[width=1\textwidth, height=11cm]{./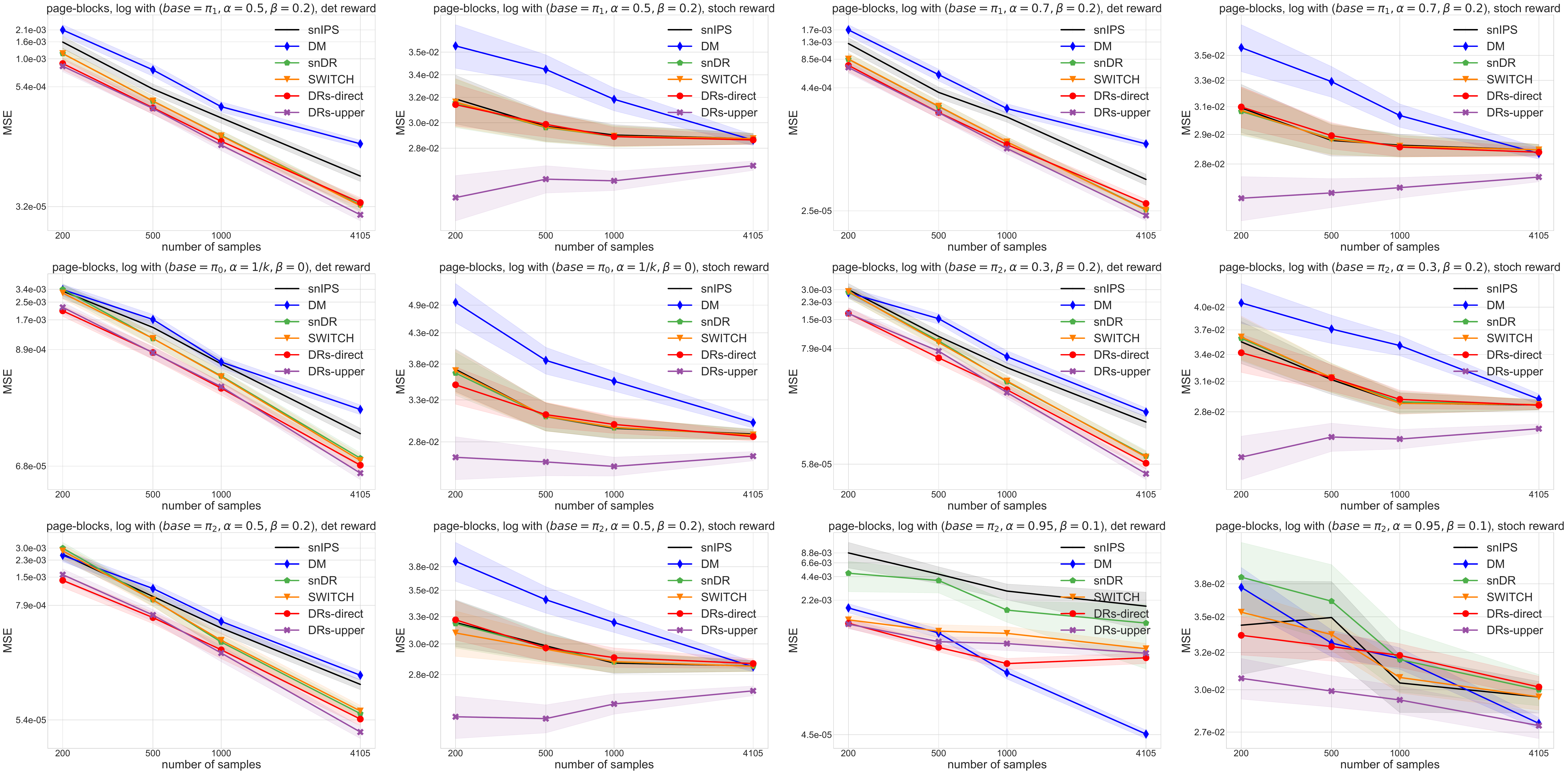}
\hspace{-0.2cm}
\end{center}
\vspace{-0.5cm}
\end{figure}

\end{document}